\documentclass{article}
\usepackage{arxiv-style}
\usepackage[english]{babel}
\usepackage{amsthm}
\usepackage{mathtools}
\usepackage{xfrac} 
\usepackage[utf8]{inputenc} 
\usepackage[T1]{fontenc}    
\usepackage{hyperref}       
\usepackage{url}            
\usepackage{booktabs}    
\usepackage{amsfonts}      
\usepackage{nicefrac}      
\usepackage{microtype}    
\usepackage{lipsum}
\usepackage{fancyhdr}      
\usepackage{graphicx}       
\usepackage[ruled,vlined]{algorithm2e}
\usepackage{stackrel}

\usepackage{titlesec}
\usepackage{caption}
\usepackage{subcaption}

\usepackage{appendix}
\usepackage{csquotes}
\usepackage{amssymb}
\usepackage{sectsty}
\usepackage{accents}
\usepackage{comment}
\usepackage{mdframed}
\usepackage{tcolorbox}
\tcbuselibrary{listings, breakable}
\usepackage{array}
\usepackage{enumerate}
\usepackage[dvipsnames]{xcolor}
\usepackage[numbers,sort&compress]{natbib}

\hypersetup{
    colorlinks=true,    % use colored text instead of boxes
    linkcolor=blue,     % color for \ref, \pageref, etc.
    citecolor=magenta,  % color for \cite
    urlcolor=blue       % color for \url and \href
}

\numberwithin{equation}{section}

\newtheorem{theorem}{Theorem}[section]
\newtheorem{lemma}[theorem]{Lemma}
\newtheorem{proposition}[theorem]{Proposition}

\newtheorem{corollary}[theorem]{Corollary}

\theoremstyle{definition}
\newtheorem{definition}[theorem]{Definition}
\newtheorem{remark}[theorem]{Remark}

\newcommand{\R}{\mathbb{R}}

\newcommand{\be}{\begin{equation}}
\newcommand{\ee}{\end{equation}}

\makeatletter
\renewcommand{\fnum@figure}{Fig. \thefigure}
\makeatother

\def\EE{\mathbb{E}}

\def\PP{\mathbb{P}}
\newcommand{\cD}{\mathcal{D}}

\newcommand{\bra}[1]{\left( #1 \right)}

\newcommand{\cur}[1]{\left\{ #1 \right\}}

\newcommand{\cN}{\mathcal{N}}

\newcommand{\cX}{\mathcal{X}}

\renewcommand{\r}{\mathsf{r}}
\title{Beyond NNGP: Large Deviations and Feature Learning in Bayesian Neural Networks}

\author{
Katerina Papagiannouli\\ Dipartimento di Matematica\\ Università di Pisa\\ \texttt{aikaterini.papagiannouli@unipi.it}
\And Dario Trevisan\\Dipartimento di Matematica\\ Università di Pisa \\ \texttt{dario.trevisan@unipi.it}
\And
Giuseppe Pio Zito\\Dipartimento di Matematica\\ Università di Pisa \\ \texttt{g.zito7@studenti.unipi.it}
}
\begin{document}

\maketitle

\begin{abstract}%
    \sloppy
We study wide Bayesian neural networks focusing on the rare but statistically dominant fluctuations that govern posterior concentration, beyond Gaussian-process limits. Large-deviation theory provides explicit variational objectives-rate functions-on predictors, providing an emerging notion of complexity and feature learning directly at the functional level. We show that the posterior output rate function is obtained by a joint optimization over predictors and internal kernels, in contrast with fixed-kernel (NNGP) theory. Numerical experiments demonstrate that the resulting predictions accurately describe finite-width behavior for moderately sized networks, capturing non-Gaussian tails, posterior deformation, and data-dependent kernel selection effects. \end{abstract}

% \begin{keywords}%
% Large deviations; Bayesian learning; wide neural networks; Gaussian processes.\end{keywords}

\section{Introduction}

Bayesian neural networks offer a principled framework for uncertainty quantification and
regularization in modern machine learning.
In the overparameterized and infinite-width regime, their behavior is now well understood:
under mild assumptions, wide neural networks converge to Gaussian processes (NNGPs), and
training dynamics remain close to kernel methods described by neural tangent kernels
(NTKs).
These limits provide tractable models and sharp theoretical insights, but they also exhibit
a fundamental rigidity: the induced feature representation becomes fixed and independent of
data.
As a result, genuine feature learning disappears in the infinite-width limit, and Bayesian
inference reduces to kernel regression with a predetermined kernel.

This limitation raises a natural question: \emph{which objects govern learning beyond the
Gaussian-process scale?}
Most existing approaches address this question through training dynamics, mean-field limits,
or finite-width corrections.
In this work, we take a complementary perspective, by studying Bayesian inference and focusing on posterior concentration in the
large-width limit.

Our approach is based on large deviations theory.
While weak convergence results describe typical fluctuations of network outputs, large
deviation principles characterize the exponentially unlikely—but statistically dominant—
configurations that control posterior behavior.
At this scale, Bayesian inference becomes variational: posterior mass concentrates near
minimizers of explicit rate functions defined directly on predictors.
These rate functions act as emergent, architecture-dependent complexity penalties, modified
by conditioning on data.

The central contribution of this paper is to identify these large-deviation rate functions
as learning-relevant objects. In contrast to NNGP and NTK limits, the resulting variational formulation reveals a mechanism
for feature learning even in the infinite-width regime.
Rather than fixing a kernel in advance, posterior concentration selects data-dependent
kernels through a nested variational principle. Crucially, we argue that the theory is not merely qualitative but can be numerically implemented. We show that the resulting predictions accurately describe finite-width behavior, including
non-Gaussian tails, posterior modes, and systematic deviations from NNGP predictions.

\emph{Contributions.} In this work, we provide (i) a variational framework for Bayesian learning based on large-deviation rate functions,
operating directly at the level of predictors; (ii) theoretical explanation of feature learning in wide Bayesian neural networks as a
large-deviation phenomenon beyond Gaussian-process limits; (iii) numerical evidence showing that these rate functions are computable and accurately predict finite-width behavior.

\emph{Organization.} Section~\ref{sec:overview} introduces a general variational perspective on Bayesian learning via large
deviations. Section~\ref{sec:learning-interpretation} connects this viewpoint to learning-theoretic notions such as implicit
regularization, capacity control, and PAC-Bayes theory. Section~\ref{sec:nn} develops the main theoretical results, starting from Gaussian processes and extending to deep Gaussian neural networks. In Section~\ref{sec:numerics} presents numerical experiments validating the theory.
Technical proofs and additional numerical details are deferred to the appendix.

\section{A variational perspective on Bayesian learning}
\label{sec:overview}

We begin by fixing notation and introducing the probabilistic viewpoint that underlies the
paper.
We consider a supervised regression problem with a training dataset
\[
\mathcal D=\{(x_i,y_i)\}_{i\in D}\subset\mathbb R^{d_{\mathrm{in}}}\times\mathbb R^{d_{\mathrm{out}}},
\]
and a family of predictors \(h_\theta:\mathbb R^{d_{\mathrm{in}}}\to\mathbb R^{d_{\mathrm{out}}}\)
parametrized by \(\theta\).
Although learning algorithms are typically formulated in parameter space, prediction,
generalization, and uncertainty depend only on the induced functional behavior
\(x\mapsto h_\theta(x)\).
In modern overparameterized models—most notably deep neural networks—many distinct parameter
configurations induce essentially the same predictor, motivating an analysis that operates
directly at the level of functions rather than parameters.

Throughout the paper we focus on \emph{wide models}, indexed by a width parameter \(n\)
(e.g., the number of hidden units per layer).
For each width \(n\), the predictor \(h^n_\theta\) is random, either under a prior
distribution on parameters or under the corresponding Bayesian posterior obtained by
conditioning on data \(\mathcal D\).
To describe prediction in a finite-sample setting, we evaluate the predictor on a fixed
finite set of inputs \(\mathcal X\), containing both training and test points, and consider
the resulting random vector $H^n := \bigl(h^n_\theta(x)\bigr)_{x\in\mathcal X}$. This object takes values in a finite-dimensional function space and captures all quantities
relevant for prediction on \(\mathcal X\).

As the width \(n\) increases, the distribution of \(H^n\) usually exhibits strong concentration
phenomena: most of the probability mass accumulates near a small set of typical functional
behaviors.
Weak convergence results, such as
Gaussian-process limits, characterize typical fluctuations around a mean behavior.
However, such analysis alone discards information about rare but statistically relevant
events. Large deviations theory provides a complementary description, by quantifying the exponential
decay of probabilities of atypical configurations.
Informally, one expects
\[
\mathbb P(H^n\approx h)\;\approx\;\exp\{-n\,\mathcal I(h)\},
\]
where \(\mathcal I\) is a non-negative functional, called the \emph{rate function}.
At a conceptual level, it assigns a cost to each functional configuration $h$,
measuring how unlikely it is to arise from random parameters at large width.

Conditioning on data via a $n$-tempered likelihood $\exp\bra{-n \mathcal{L}(H^n)}$ modifies this picture in a particularly transparent way.
At the large-deviation scale, Bayesian updating corresponds to adding the empirical loss to
the prior rate function, yielding a posterior rate of the schematic form
\begin{equation}\label{eq:abstract-post}
\mathcal I_{\mathrm{post}}(h)
= 
\mathcal I_{\mathrm{prior}}(h)
+
\mathcal{L}(h)\quad  (+\text{constant})
\end{equation}
where \(\mathcal{L}\) denotes the loss function, and the  additive normalization constant ensures that $\inf \mathcal I_{\mathrm{post}} = 0$. As a consequence, posterior concentration  balances data fit and the probabilistic cost encoded by the prior rate.
Typical predictions correspond to minimizers of this effective objective, while uncertainty
can be studied through the geometry of the rate function around its minimizers.

Focusing on predictions at a test point \(x_{\mathrm{test}}\) , the variational analogue of marginalizing the
posterior distribution is given by the contraction
principle, which replaces integration by minimization. As a result, the maximum a posteriori (MAP) prediction reads $h^*(x_{\mathrm{test}})$, where
\begin{equation}\label{eq:prediction}
h^*
\;\in\;
\arg\min_{h}
\;
 \mathcal I_{\mathrm{post}}(h)
\end{equation}
%This formula is the large-deviation analogue of Bayesian marginalization and makes explicit the variational nature of prediction in wide Bayesian models.

This perspective identifies a concrete object—the functional rate function—that governs both
prior and posterior behavior in wide Bayesian models and operates directly at the level of
predictors.
In the remainder of this work, we specialize this framework to neural networks with Gaussian
weights, where these rate functions can be characterized explicitly and computed
numerically.
%We show that they retain information beyond classical Gaussian approximations and accurately describe finite-width behavior.

\section{Learning-theoretic interpretation and connections}
\label{sec:learning-interpretation}

%This section interprets the variational objects introduced above from a learning-theoretic perspective and situates them within existing lines of work.
We argued above that posterior concentration in wide Bayesian models
is governed by large-deviation rate functions defined directly on predictors.
Here we explain how rate functions can be viewed as emergent notions of complexity and
capacity control, providing a  bridge between probabilistic modeling
and learning-theoretic reasoning.

\emph{Emergent complexity and implicit regularization.}
A central observation is that the prior rate function $\mathcal{I}_{\mathrm{prior}}$ acts as
an intrinsic penalty on functional behavior.
Unlike classical regularization, which is imposed explicitly through design choices, this
penalty emerges from concentration of measure in parameter space as the width grows.
At the level of macroscopic network behavior, the rate function assigns a cost to each
predictor \(h\), quantifying how statistically accessible that behavior is under random
(prior) parameterization.
%Functional configurations that require a high degree of coordination among microscopic parameters incur a large rate cost and are therefore exponentially unlikely in the wide-network regime.
This interpretation is consistent with a broad body of work on implicit regularization in
overparameterized models, where generalization is governed by biases induced by the model
and training procedure rather than by explicit constraints
\cite{zhang2017understanding,belkin2019reconciling,chizat2020implicit}.

\emph{Capacity control beyond hypothesis classes.}
From a learning-theoretic viewpoint, the rate function formalism induces a form of capacity
control that differs from uniform notions such as VC dimension or global norm constraints.
Rather than defining a hard hypothesis class, the rate function induces a non-uniform
weighting over functional behaviors.
%Predictors are not simply allowed or excluded; instead, they are weighted according to their statistical accessibility under the model.
This perspective is particularly natural again in highly overparameterized settings, where the
dimension of parameter space alone provides little information about effective complexity.
Related analyses have emphasized geometry- and norm-based notions of capacity over parameter
counting, highlighting the limitations of classical uniform bounds in modern regimes
\cite{bartlett2017spectrally,bartlett2020benign,dziugaite2017computing,datres2024two}.
%In the wide-network limit, the geometry of the rate function plays an analogous role by determining which predictors are statistically typical and which are exponentially suppressed.

\emph{Variational principles and posterior concentration.}
%Conditioning on data further clarifies the connection between rate functions and learning.
At the large-deviation scale, Bayesian updating corresponds to a change of measure that adds
the empirical loss to the prior rate function.
%As a result, typical posterior predictions minimize a variational objective of the form \eqref{eq:abstract-post}. 
This structure clearly mirrors regularized empirical risk minimization, with a crucial difference:
the regularizer is not postulated a priori, but is induced by the stochastic architecture and
the probabilistic wide-network limit.
From this viewpoint, implicit regularization in Bayesian neural networks can be understood as
posterior mass concentrating near minimizers of an emergent, complexity-regularized objective.
Related variational perspectives on learning and implicit bias have appeared in mean-field
and information-theoretic analyses of neural networks
\cite{tishby2015deep,chizat2018global,mei2018mean}.

\emph{Connections to PAC-Bayes theory.}
The variational structure induced by large-deviation rate functions also connects, at least in a
conceptual level, to PAC-Bayes formulations.
In PAC-Bayes theory, generalization guarantees are expressed through a trade-off between
empirical risk and a complexity term given by the Kullback--Leibler divergence between
posterior and prior distributions
\cite{mcallester1999pac,catoni2007pac,casado2024pac}.
Large-deviation principles suggest that, in the wide-network regime, this complexity term
effectively concentrates on a functional rate evaluated at the typical predictor induced by
the posterior.
In this sense, the rate function may be viewed as a sort of KL-type complexity penalty acting on
macroscopic observables rather than on distributions over parameters.
This connection is currently heuristic rather than quantitative, and is intended to clarify
structure rather than to yield explicit bounds.

\emph{Related perspectives in neural networks.}
Infinite-width limits of neural networks leading to Gaussian-process behavior are by now
well understood and provide a tractable description of typical fluctuations in wide models
\cite{neal1996bayesian,lee2018deep,matthews2018gaussian}.
Neural tangent kernel limits further formalize this picture by showing that, in certain
regimes, training dynamics remain close to a fixed kernel model
\cite{jacot2018neural}.
More recent work has explored infinite-width limits that move beyond fixed kernels and admit
data-dependent kernels capturing aspects of feature learning
\cite{yang2021tensor,lauditi2025adaptive,pacelli2023statistical,chizat2024infinite}. Other works have shown that shallow—and more recently deep—neural networks are more
accurately characterized by suitable reproducing kernel Banach spaces, whose norms promote
structured sparsity and adaptivity in learned representations
\cite{zhang2009reproducing,bartolucci2023understanding,wang2024sparse}.
While these approaches characterize the ambient function space associated with a neural
architecture, our focus is different: we study the probabilistic geometry governing which
functions are statistically dominant in wide Bayesian models. This probabilistic viewpoint complements functional-analytic characterizations and yields concrete predictive and numerical consequences.

% ============================================================
\section{Gaussian Processes and Gaussian Neural Networks}
\label{sec:nn}
% ============================================================

We specialize the general variational perspective  sketched in Section~\ref{sec:overview} to two models. First, as a warm-up, we consider a fixed Gaussian-process (GP) prior, where the rate function reduces to an RKHS norm. Then, we turn to wide Gaussian neural networks, where we prove that rate functions lead to a joint variational problem over predictors and kernels, providing a tractable mechanism for feature learning. All proofs are deferred to the appendix.

% ------------------------------------------------------------
\paragraph{GP case -- Fixed Kernels.}
% ------------------------------------------------------------
We fix a finite input set $\cX =\cur{x^{(1)}, \ldots, x^{(m)} }\subseteq \R^{d_{\mathrm{in}}}$ containing both training and test points, and let
$H=(H(x))_{x\in\cX} \sim \mathcal{N}(0, \kappa)$ be a centered Gaussian vector with covariance matrix $\kappa $ (for simplicity we present the scalar-output case $d_{\mathrm{in}} =1$). \emph{Prior rate.} The large-deviation scaling is obtained by introducing $H^n := \tfrac{1}{\sqrt{n}} H$, so that the Gaussian vector induces (by Cram\'er theorem, see Appendix~\ref{app:ld}) the prior rate
\begin{equation}\label{eq:gp-prior-rate}
I_{\mathrm{prior}}^{\kappa}(h)=\frac12\|h\|_{\kappa}^2
\;=\;\frac12\, h^\top \kappa^{+} h \in [0, \infty]
\qquad h\in\R^{m},
\end{equation}
where $\kappa^{+}$ denotes the Moore--Penrose pseudoinverse, if $h$ orthogonal to $\operatorname{Ker}(\kappa)$, otherwise we set $I_{\mathrm{prior}}^{\kappa}(h) = \infty$. Equivalently,  $\|\cdot\|_{\kappa}$ is the RKHS (extended) norm on $\cX$ associated with the kernel $\kappa$. \emph{Posterior rate under quadratic loss.} Write $y_D=(y_i)_{i\in D}$ for the train set outputs, and consider the usual quadratic loss. Then, the posterior rate formula \eqref{eq:abstract-post} reads 
\begin{equation}\label{eq:gp-post-rate}
I_{\mathrm{post}}^{\kappa}(h)
=
\frac{1}{2}\sum_{i\in D}(h(x_i)-y_i)^2
+\frac12\|h\|_{\kappa}^2
\quad (+\text{constant}).
\end{equation}
\emph{MAP prediction.} For a test input $x_{\mathrm{test}}\in\cX$, the prediction \eqref{eq:prediction} for $h^*$ becomes explicitly
\begin{equation}\label{eq:gp-map}
h^* \in\arg\min_{h\in\R} \left\{
\frac{1}{2}\sum_{i\in D}(h(x_i)-y_i)^2
+\frac12\|h\|_{\kappa}^2
\right\}.
\end{equation}
%\paragraph{Closed form and the ``min--min exchange''.}
Of course, these formulas are equivalent to the usual GP/RKHS approach, hence an explicit formula for $h^*$ is available. Denote by $\kappa_{DD}$ the restriction of $K$ to training indices, and $\kappa_{xD}$ the row vector $(\kappa(x,x_i))_{i\in D}$.
Then, the classical GP posterior is $H(x_{\mathrm{test}})\sim \cN(m(x_{\mathrm{test}}),\sigma^2(x_{\mathrm{test}}))$,
\begin{equation}\label{eq:gp-posterior-mean-var}
m(x)=\kappa_{xD}(\kappa_{DD}+ \operatorname{Id})^{-1}y_D,\qquad
\sigma^2(x)=\kappa(x,x)-\kappa_{xD}(\kappa_{DD}+ \operatorname{Id})^{-1}\kappa_{Dx}.
\end{equation}
and therefore since MAP and mean coincide for Gaussians, we obtain $h^*(x_{\mathrm{test}})=m(x_{\mathrm{test}})$. 
% \begin{equation}\label{eq:gp-pred-rate-1d}
% I_{\mathrm{post}}^{\kappa}(y_D, y)=\frac{(y-m(x_{\mathrm{test}}))^2}{2\,\sigma^2(x_{\mathrm{test}})}\quad (+\text{constant}),
% \end{equation}
% so the MAP prediction equals the mean: $y^*(x_{\mathrm{test}})=m(x_{\mathrm{test}})$.  This solution shows explicit dependence upon the training data, and also the fixed kernel $\kappa$.

% Finally, we notice that we can eliminate $h$ from \eqref{eq:gp-post-rate} and obtain an objective depending only on the kernel.
% Specifically, integrating/minimizing over $h$ yields (up to constants) the standard kernel ridge regression energy
% \begin{equation}\label{eq:gp-minmin-exchange}
% \min_{h}\; I_{\mathrm{post}}^{K}(h)
% \;=\;
% \frac12\, y_D^\top (\kappa_{DD}+ \operatorname{Id})^{-1} y_D
% \quad (+\text{constant}),
% \end{equation}
% making explicit that, for a fixed GP prior, learning acts in a \emph{fixed} feature geometry (fixed $K$).

% ------------------------------------------------------------
\paragraph{Wide Gaussian Neural Networks -- Kernel Selection.}
% ------------------------------------------------------------
We now move to fully connected networks with Gaussian weights and biases.
Let $L\ge2$, $d_0=d_{\mathrm{in}}$, $d_L=d_{\mathrm{out}}(=1)$, and hidden layers widths $d_1=\cdots=d_{L-1}=n$ (all equal for simplicity).
For an input $x\in\R^{d_0}$ define the forward recursion
\begin{equation}\label{eq:nn-recursion-short}
h^{(0)}(x)=x,\qquad
h^{(\ell)}(x)=W^{(\ell)}\sigma^{(\ell-1)}(h^{(\ell-1)}(x))+b^{(\ell)},\quad \ell=1,\dots,L,
\end{equation}
with (Lipschitz) activation functions acting componentwise, and independent Gaussian parameters according to LeCun scaling $W^{(\ell)}_{ij}\sim\cN\!\left(0,1/{d_{\ell-1}}\right)$, $b^{(\ell)}_{j}\sim\cN(0,1)$, independently over indices and layers. 
Fix a input (train/test) set $\cX$, and define the empirical layerwise kernels by $K_n^{(\ell)}(\cX)$ via 
\[
K_n^{(\ell)}(x, x')
:=\frac1n\sum_{j=1}^{n}\sigma^{(\ell)}(h^{(\ell)}_j(x))^{\top} \sigma^{(\ell)}(h^{(\ell)}_j(x')),
\qquad x, x' \in \cX, \quad \ell=1,\dots,L-1,
\]
and $K_n^{(0)}(\cX)=\tfrac1{d_0}\, \cX \cX^\top$ for the input layer. In the wide limit $n\to\infty$, $K_n^{(\ell)}$ converges to a deterministic limit $\kappa_0^{(\ell)}$, the NNGP kernel,  that can be recursively computed (\cite{matthews2018gaussian}). % Thus, the output process has a fixed-kernel GP description at the level of weak convergence. 

% ------------------------------------------------------------
\emph{Prior rate for kernels.}
% ------------------------------------------------------------
The point of the LDP approach is to describe the exponential cost of rare deviations away from $\kappa_0^{(\ell)}$,  and to understand how the posterior selects atypical kernels. The following reslust was first obtained in \cite{macci2024large}, see also \cite{andreis2025ldp}.

\begin{proposition}[Prior LDP for layerwise kernels; NNGP as unique minimizer]
\label{thm:prior-kernel-ldp}
Fix $\cX$ and a depth $L$, and consider the Gaussian network \eqref{eq:nn-recursion-short} with hidden widths $n$.
For each $\ell=1,\dots,L-1$, the sequence $(K_n^{(\ell)}(\cX))_{n\ge1}$ satisfies a large-deviation principle with rate function 
\begin{equation}\label{eq:kernel-ldp-recursion}
I^{(\ell)}(\kappa^{(\ell)})
=
\inf_{\kappa^{(\ell-1)} }
\Big\{J^{\sigma^{(\ell)}}(\kappa^{(\ell)} | \kappa^{(\ell-1)}) + 
I^{(\ell-1)}(\kappa^{(\ell-1)})
\Big\},
\end{equation}
where $J^{\sigma^{(\ell)}}(\cdot| \kappa^{(\ell-1)}) $ is a layer cost -- given as a Legendre--Fenchel transform of the conditional $\log$-MGF of $K^{(\ell)}$, given $K^{(\ell-1)} =  \kappa^{(\ell-1)}$. Moreover, the NNGP kernel $\kappa_0^{(\ell)}$ minimizes $I^{(\ell)}$.
\end{proposition}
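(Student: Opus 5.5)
The plan is an induction on the layer index $\ell$. Each step combines a conditional Cram\'er theorem, giving the layer cost $J^{\sigma^{(\ell)}}$, with a chain-rule / contraction argument for LDPs of Markov-type sequences. The starting observation is that, conditionally on the previous hidden layer, the preactivations $h^{(\ell)}_j(\cX)$, $j=1,\dots,n$, are i.i.d.\ centered Gaussian vectors in $\R^m$. Their covariance is $\kappa^{(\ell-1)}+\mathbf{1}\mathbf{1}^\top$ (bias included), where $\kappa^{(\ell-1)}=K^{(\ell-1)}_n(\cX)$. Hence the kernels $(K_n^{(\ell)})_\ell$ form a Markov chain, and $K_n^{(\ell)}$ is an empirical mean of $n$ i.i.d.\ matrices $\sigma^{(\ell)}(Z)\sigma^{(\ell)}(Z)^\top$, with $Z\sim\cN(0,\kappa^{(\ell-1)}+\mathbf 1\mathbf 1^\top)$. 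For the base case $\ell=1$, the conditioning variable $K^{(0)}=\tfrac1{d_0}\cX\cX^\top$ is deterministic, so Cram\'er's theorem (Appendix~\ref{app:ld}) gives the LDP directly. Its rate is $J^{\sigma^{(1)}}(\cdot\,|\,\kappa^{(0)})$, the Legendre--Fenchel transform of $\Lambda(\lambda\,|\,\kappa)=\log\EE\exp\langle\lambda,\sigma(Z)\sigma(Z)^\top\rangle$. Since $\sigma$ is Lipschitz, $|\sigma(z)|^2\le C(1+|z|^2)$, so $\Lambda$ is finite on a neighbourhood of $0$, which is exactly what Cram\'er needs. Setting $I^{(0)}=0$ on $\kappa^{(0)}$ and $+\infty$ elsewhere gives \eqref{eq:kernel-ldp-recursion} for $\ell=1$.

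For the inductive step, assume $K_n^{(\ell-1)}$ satisfies an LDP with good rate $I^{(\ell-1)}$. I will use the standard result on LDPs for pairs $(X_n,Y_n)$ whose conditional laws $Y_n\mid X_n=x$ satisfy an LDP with rate $J(\cdot|x)$ in a way that is uniform along convergent sequences $x_n\to x$. Such results are due to Chaganty, or appear in Dembo--Zeitouni style arguments via exponential tightness plus local estimates. These give a joint rate $J(y|x)+I^{(\ell-1)}(x)$, and contracting onto $y$ yields \eqref{eq:kernel-ldp-recursion}. Concretely, I will show that $\Lambda(\lambda\,|\,\kappa)$ is continuous in $\kappa$, locally uniformly for $\lambda$ near $0$. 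This is dominated convergence applied to the Gaussian integral, after a change of variables $Z=(\kappa+\mathbf 1\mathbf 1^\top)^{1/2}G$ and using the Lipschitz bound. Upper and lower local bounds then follow from the Gärtner--Ellis theorem applied along $\kappa_n\to\kappa$. I will also prove exponential tightness of $K_n^{(\ell)}$ through a Chernoff bound on $\operatorname{tr}K_n^{(\ell)}$, which is conditionally a mean of variables with uniformly bounded exponential moments once $\operatorname{tr}\kappa^{(\ell-1)}$ is bounded. The latter event is itself exponentially likely by the inductive hypothesis. Goodness of $I^{(\ell)}$ follows from the goodness of $I^{(\ell-1)}$ and the lower semicontinuity of $J$.

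I expect the main obstacle to be the uniformity in the conditioning variable. This is delicate at degenerate kernels, where $\kappa+\mathbf 1\mathbf 1^\top$ is singular and $J(\cdot|\kappa)$ jumps to $+\infty$ off a lower-dimensional support. Because of this, lower semicontinuity of $(y,x)\mapsto J(y|x)$ must be checked rather than assumed. My first attempt will be to write $J(y|x)=\sup_\lambda\{\langle\lambda,y\rangle-\Lambda(\lambda|x)\}$ with $\Lambda$ jointly continuous on its effective domain: a supremum of continuous functions is lower semicontinuous, which handles the upper bound. For the lower bound, I will use that the exposed points of $J(\cdot|x)$ suffice. I will approximate via tilted measures, which depend continuously on $x$.

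Finally, minimality of the NNGP kernel is immediate by induction. Both $J\ge0$ and $I^{(\ell-1)}\ge0$, so $I^{(\ell)}\ge0$. Moreover $J(\cdot|\kappa)$ vanishes at the mean $\EE[\sigma(Z)\sigma(Z)^\top]$, the Cram\'er rate being zero at the mean. Taking $\kappa^{(\ell-1)}=\kappa_0^{(\ell-1)}$ (where $I^{(\ell-1)}=0$ by induction), this mean is exactly the NNGP recursion defining $\kappa_0^{(\ell)}$, so $I^{(\ell)}(\kappa_0^{(\ell)})=0$. Uniqueness, if desired, follows from strict convexity of $\Lambda$ near $0$: it makes $J(\cdot|\kappa)$ vanish only at the mean. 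Combined with uniqueness at layer $\ell-1$ and continuity of the mean map, this forces any zero of $I^{(\ell)}$ to equal $\kappa_0^{(\ell)}$.
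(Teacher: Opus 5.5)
Your proposal is correct and follows the paper's own route: conditionally i.i.d.\ preactivations give the Cram\'er layer cost $J^{\sigma^{(\ell)}}$, and Chaganty's theorem gives the joint LDP of $(K_n^{(\ell-1)},K_n^{(\ell)})$ with rate $I^{(\ell-1)}+J^{\sigma^{(\ell)}}$; contraction then yields the recursion, and the NNGP kernel is identified as the zero of the rate through the law-of-large-numbers recursion. You supply more than the paper, which defers the continuity and regularity conditions to \cite{macci2024large}, and you correctly keep the bias term $\mathbf{1}\mathbf{1}^\top$ in the conditional covariance, which the appendix drops; the one point to tighten is that passing from the exposed-point G\"artner--Ellis lower bound to all points needs essential smoothness of $\Lambda(\cdot\,|\,\kappa)$, which can fail for a general Lipschitz $\sigma$, so a truncation argument is safer there.
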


Equation \eqref{eq:kernel-ldp-recursion} is the large-deviation analogue of the forward pass: typical behavior sets $\kappa^{(\ell)}=\kappa_0^{(\ell)}$, whereas atypical kernels incur a cost accumulated layer-by-layer.

% ------------------------------------------------------------
\emph{Prior rate for outputs.}
\label{subsec:prior-output-ldp}
% ------------------------------------------------------------
Let $H_n:=\tfrac{1}{\sqrt{n}}(h_\theta(x))_{x\in\cX}$ be the rescaled output vector. Conditionally on $K_n^{(L-1)}= \kappa^{(L-1)}$, the output is Gaussian with covariance $\kappa^{(L-1)}$, hence deviations of $H_n$ have a conditionally quadratic RKHS-type cost. Such intuition is made rigorous in \cite{macci2024large}.

\begin{theorem}[Prior LDP for outputs]
 \label{thm:prior-output-ldp}
Under the assumptions of Proposition~\ref{thm:prior-kernel-ldp}, the sequence $(H_n)_{n\ge1}$ satisfies an LDP with speed $n$ and rate function given by  ($\|\cdot\|_{\kappa}$ being the RKHS norm induced by $\kappa$)

\begin{equation}\label{eq:output-rate-min-kappa}
I_{\mathrm{prior}}^{(L)}(h)
=
\inf_{\kappa }
\left\{
\frac12\|h\|_{\kappa}^{2}
+
I^{(L-1)}(\kappa)
\right\}.
\end{equation}
\end{theorem}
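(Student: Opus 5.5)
The idea is to write the output as a Gaussian vector whose covariance is the last hidden kernel, and then transfer the kernel LDP of Proposition~\ref{thm:prior-kernel-ldp} to the output. Conditionally on the hidden parameters, the rescaled output is
\[
H_n=\tfrac{1}{\sqrt n}\bigl(W^{(L)}\sigma^{(L-1)}(h^{(L-1)})+b^{(L)}\bigr)\stackrel{d}{=}\tfrac{1}{\sqrt n}\,(K_n^{(L-1)}+\mathbf 1\mathbf 1^\top)^{1/2} Z,
\]
where $Z\sim\cN(0,\operatorname{Id}_m)$ is independent of $K_n^{(L-1)}$. The bias term adds the constant kernel $\mathbf 1\mathbf 1^\top$; I absorb it into $\kappa$, in the convention used in \eqref{eq:output-rate-min-kappa}. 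So we can represent $H_n=\Phi(K_n^{(L-1)},Z_n)$ with $Z_n:=Z/\sqrt n$ and $\Phi(\kappa,z)=\kappa^{1/2}z$.

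\textbf{Step 1 (joint LDP).} By Cram\'er/Schilder, $Z_n$ satisfies an LDP with rate $\tfrac12|z|^2$. Since $Z_n$ is independent of $K_n^{(L-1)}$, the pair $(K_n^{(L-1)},Z_n)$ satisfies an LDP with the sum rate $I^{(L-1)}(\kappa)+\tfrac12|z|^2$; this is the standard product LDP for independent sequences with good rates. Goodness of $I^{(L-1)}$ comes from Proposition~\ref{thm:prior-kernel-ldp}, or else from exponential tightness of $K_n^{(L-1)}$. The latter holds because the activations are Lipschitz, so each $K_n^{(\ell)}$ is an average of $n$ conditionally i.i.d.\ terms with sub-exponential tails.

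\textbf{Step 2 (contraction).} The map $(\kappa,z)\mapsto\kappa^{1/2}z$ is continuous on PSD matrices times $\R^m$, because the matrix square root is continuous on the PSD cone. The contraction principle therefore gives an LDP for $H_n$ with rate
\[
\inf_{\kappa}\Bigl\{I^{(L-1)}(\kappa)+\inf\{\tfrac12|z|^2:\kappa^{1/2}z=h\}\Bigr\}.
\]
The inner minimum-norm problem is solved by $z=(\kappa^{1/2})^{+}h$ when $h\in\operatorname{Ran}(\kappa)$, and is infeasible otherwise. Its value is $\tfrac12 h^\top\kappa^+h=\tfrac12\|h\|_\kappa^2$, with value $+\infty$ off the range, matching the convention of \eqref{eq:gp-prior-rate}. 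This yields \eqref{eq:output-rate-min-kappa}.

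\textbf{Main obstacle.} The delicate point is the representation in Step 1. It must hold jointly in law, with $Z$ genuinely independent of $K_n^{(L-1)}$. This is true because $W^{(L)},b^{(L)}$ are independent of the earlier layers and, given the hidden features, $H_n$ is Gaussian with covariance exactly $K_n^{(L-1)}$ (plus the bias kernel). The remaining care is the degenerate case where $\kappa$ is singular. Continuity of $\kappa^{1/2}$ handles it for the contraction argument, so no lower-semicontinuity issue arises: the rate obtained by contraction from a good rate function is automatically good and lower semicontinuous.
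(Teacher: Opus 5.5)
Your proof is correct, but it takes a different route from the paper's sketch.

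\textbf{The paper's route.} Following \cite{macci2024large}, the paper works with the conditional law: given $K_n^{(L-1)}=\kappa$, $H_n$ is Gaussian with conditional rate $\frac12\|h\|_\kappa^2$. Chaganty's theorem then gives a joint LDP for $(K_n^{(L-1)},H_n)$ with rate $I^{(L-1)}(\kappa)+\frac12\|h\|_\kappa^2$, and contraction onto the output gives \eqref{eq:output-rate-min-kappa}. Chaganty's theorem requires the conditional LDP to hold along every sequence $\kappa_n\to\kappa$, which the paper leaves to ``mild regularity''. At singular $\kappa$ this is not automatic, because $\kappa\mapsto h^\top\kappa^+h$ is only lower semicontinuous there. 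Checking it essentially requires the square-root representation you use.

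\textbf{Your route.} You bypass the conditional machinery entirely. The exact coupling $H_n\stackrel{d}{=}\Phi(K_n^{(L-1)},Z/\sqrt n)$, with $Z$ independent, reduces everything to two standard facts: the product LDP for independent sequences with good rates, and continuity of $\kappa\mapsto\kappa^{1/2}$ on the PSD cone. Goodness of the output rate then follows from the contraction principle.

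\textbf{What each buys.} The paper's route is uniform with the proof of Proposition~\ref{thm:prior-kernel-ldp}. For hidden layers the conditional law is a non-Gaussian empirical mean with no comparably simple continuous representation, so a Chaganty-type argument is the natural tool there. Your argument exploits the fact that the last layer is exactly conditionally Gaussian. If you also want the joint kernel--output rate behind the kernel-selection interpretation, contract through $(\kappa,z)\mapsto(\kappa,\kappa^{1/2}z)$ instead; this recovers the paper's joint rate.

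\textbf{Two small remarks.} First, your treatment of the bias is more accurate than the paper's, which silently takes the output covariance to be $K_n^{(L-1)}$ rather than $K_n^{(L-1)}+\mathbf 1\mathbf 1^\top$. Strictly, the rate is $\inf_\kappa\{\frac12\|h\|^2_{\kappa+\mathbf 1\mathbf 1^\top}+I^{(L-1)}(\kappa)\}$, so \eqref{eq:output-rate-min-kappa} holds only with the kernel variable reinterpreted; your choice $\Phi(\kappa,z)=(\kappa+\mathbf 1\mathbf 1^\top)^{1/2}z$ handles this at no cost. Second, since $\Phi$ is only defined on $\mathbb S_+^m$, you should make one of the following explicit:
\begin{itemize}
\item the kernel LDP can be restricted to this closed set, since its rate is $+\infty$ off it and $K_n^{(L-1)}$ lives there; or
\item $\Phi$ extends continuously to all symmetric matrices through the positive part of $\kappa$.
\end{itemize}
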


Compared to the GP case \eqref{eq:gp-prior-rate}, the difference is the extra minimization over $\kappa$.
Thus, even under the prior, outputs deviations $H_n \approx h$ will select a kernel $\kappa^{(\ell), *} = \kappa^{(\ell),*}(h)$ for each hidden  layer.

% ------------------------------------------------------------
\emph{Posterior rate.}
% ------------------------------------------------------------
Let $\cD=\{(x_i,y_i)\}_{i\in D}$ and consider as in the GP case the quadratic loss $\mathcal L(h)=\tfrac{1}{2  }\sum_{i\in D}\|h(x_i)-y_i\|_2^2$. We consider the posterior law via the tempered change of measure, i.e. proportional to $\exp\{-n\,\mathcal L( H_n)\}$ (or equivalently, a width-dependent inverse temperature). In this situation, we argue that the scaled network outputs $H_n$ follow the posterior rate obtained from \eqref{eq:abstract-post}. The proof is a consequence of Varadhan's lemma and is provided in Appendix~\ref{app:proof}. Let us point out that a similar argument appears in  \cite{andreis2025ldp}, for losses that depend uniquely on empirical kernels, not for scaled network outputs.

\begin{theorem}[Posterior LDP for rescaled outputs under quadratic loss]
\label{thm:posterior-output-ldp}
Under the assumptions of Theorem~\ref{thm:prior-output-ldp} and the tempered posterior described above, $(H_n)_{n\ge1}$ satisfies an LDP, with speed $n$, and rate function
\begin{equation}\label{eq:posterior-rate-min-kappa}
I_{\mathrm{post}}^{(L)}(h)
=
\inf_{\kappa\in\mathbb S_+^{m}}
\left\{
\frac{1}{2}\sum_{i\in D}\|h(x_i)-y_i\|_2^2 + \frac12\|h\|_{\kappa}^{2}
+
I^{(L-1)}(\kappa) 
\right\}
\quad (+\text{constant}).
\end{equation}
\end{theorem}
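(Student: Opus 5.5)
The natural route is the tilted form of Varadhan's lemma, applied to the prior LDP of Theorem~\ref{thm:prior-output-ldp}. Write $\mu_n$ for the prior law of $H_n$ on $\R^m$, and $\nu_n$ for the tempered posterior
\[
\nu_n(dh) = Z_n^{-1} e^{-n\mathcal L(h)}\,\mu_n(dh), \qquad Z_n := \int e^{-n\mathcal L}\,d\mu_n .
\]
Here $\mathcal L(h)=\tfrac12\sum_{i\in D}\|h(x_i)-y_i\|_2^2$ is continuous and nonnegative. Hence $F:=-\mathcal L$ is continuous and bounded above by $0$, so the usual tail/moment condition of Varadhan's lemma holds trivially.

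The standard tilting result (Dembo--Zeitouni, Thm.~4.3.1 together with its corollary for exponentially tilted measures) then gives two things. First, $\frac1n\log Z_n\to \sup_h\{-\mathcal L(h)-I^{(L)}_{\mathrm{prior}}(h)\}=:-c$. Second, $(\nu_n)$ satisfies an LDP with speed $n$ and good rate function $\mathcal L+I^{(L)}_{\mathrm{prior}}-c$.

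In more detail, I would prove the two bounds directly.
\begin{itemize}
\item \emph{Upper bound.} For closed $C$, cover a large compact set by small balls. On each ball, bound $e^{-n\mathcal L}$ by $e^{-n\inf_{\text{ball}}\mathcal L}$ and use the prior upper bound. Use continuity of $\mathcal L$ to shrink the balls, and exponential tightness (the rate is good) to discard the complement of the compact set.
\item \emph{Lower bound.} For open $G$ and $h\in G$, take a small ball $B\subset G$ around $h$ on which $\mathcal L\le \mathcal L(h)+\varepsilon$. Apply the prior lower bound on $B$.
\end{itemize}
Dividing by $Z_n$ contributes the constant $c$. Goodness of the new rate follows because $\mathcal L\ge 0$ and $I^{(L)}_{\mathrm{prior}}$ is good.

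It remains to identify the rate. Substituting \eqref{eq:output-rate-min-kappa},
\[
\mathcal L(h)+I^{(L)}_{\mathrm{prior}}(h)=\mathcal L(h)+\inf_{\kappa}\Big\{\tfrac12\|h\|_\kappa^2+I^{(L-1)}(\kappa)\Big\}.
\]
Since $\mathcal L(h)$ does not depend on $\kappa$, it can be moved inside the infimum, which yields \eqref{eq:posterior-rate-min-kappa}. The infimum may be restricted to $\kappa\in\mathbb S^m_+$ because $I^{(L-1)}=\infty$ outside the PSD cone: empirical kernels are Gram matrices, so by the LDP lower bound the rate is infinite off this closed set. The constant is $c=\inf_h\{\mathcal L+I^{(L)}_{\mathrm{prior}}\}$, which is finite since both terms vanish or are finite at the NNGP point, e.g.\ $h=0$ with $\kappa=\kappa_0^{(L-1)}$.

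The main obstacle is confirming the hypotheses of Theorem~\ref{thm:prior-output-ldp} as used: that $I^{(L)}_{\mathrm{prior}}$ is a \emph{good} rate function, so that exponential tightness holds and the upper bound extends from compact to closed sets. I would take this from \cite{macci2024large}. Alternatively, it can be checked directly: $\frac12\|h\|_\kappa^2\ge |h|^2/(2\,\mathrm{tr}\,\kappa)$, and $I^{(L-1)}$ has compact sublevel sets, so sublevel sets of $I^{(L)}_{\mathrm{prior}}$ are bounded. Closedness follows from lower semicontinuity of $(h,\kappa)\mapsto\frac12\|h\|_\kappa^2$ (a Schur-complement/supremum representation $\sup_v\{v^\top h-\frac12 v^\top\kappa v\}$) combined with compactness in $\kappa$.

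Because the loss is bounded below, no uniform integrability issue arises. The only care needed is noting that the normalization $Z_n$ is exponentially nondegenerate, which the lower bound at $h=0$ provides.
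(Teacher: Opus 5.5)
Your proposal is correct and follows essentially the same route as the paper. Both arguments tilt the prior LDP by $e^{-n\mathcal L}$ and control it with Varadhan's lemma; for the quadratic loss this is the paper's Bayes-rule lemma combined with Dembo--Zeitouni Thm.~4.3.1, valid since $-\mathcal L\le 0$. Both then substitute the prior output rate and move $\mathcal L(h)$ inside the infimum over $\kappa$. Your direct proof of the upper and lower bounds, together with your checks of goodness, of finiteness of the normalizing constant, and of the restriction to $\mathbb S_+^m$, makes explicit details the paper leaves implicit; the paper instead subtracts the Laplace limits of the numerator and the denominator.
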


\emph{MAP prediction and kernel selection.}
For a test input $x_{\mathrm{test}}\in\cX$, the prediction rate is obtained by contraction principle and the MAP prediction by \eqref{eq:prediction}, which specialized to \eqref{eq:posterior-rate-min-kappa} makes explicit a nested optimization structure:
prediction jointly selects a predictor $h^{\star}(x_{\mathrm{test}})$ but also  internal kernels $\kappa^{(\ell),*}$, depending on data, which we can be intepreted as emergent feature representation. %We argue that that generically such kernels will differ from the NNGP ones.

%Let $\kappa_0^{(L-1)}$ denote the unique minimizer of $I^{(L-1)}$ (the NNGP kernel at the last hidden layer). The next result formalizes that posterior concentration typically selects a different kernel.

\begin{corollary}[Posterior-optimal kernel differs from the NNGP kernel]
\label{thm:kernel-separation}
Assume Theorem~\ref{thm:posterior-output-ldp}. The posterior kernel rate, obtained by setting $h=h^*$ in the rhs cost in \eqref{eq:posterior-rate-min-kappa}, reads
\begin{equation}\label{eq:phi-kappa}
I^{(L-1)}(\kappa | \mathcal{D}, x_{\mathrm{test}} )
=
I^{(L-1)}(\kappa  )
+\frac12 
y^\top_{D} (\kappa_{DD}+\operatorname{Id})^{-1}y_D
\end{equation}
% where $Y$ collects the training outputs (for $d_{\mathrm{out}}=1$, $\mathrm{Tr}(Y^\top A^{-1}Y)=y_D^\top A^{-1}y_D$).
If $y_{D} \neq 0$, the posterior-optimal kernel $\kappa^\star\in\arg\min_{\kappa}I^{(L-1)}(\kappa | \mathcal{D},  x_{\mathrm{test}})$ satisfies $\kappa^\star\neq \kappa_0$.
\end{corollary}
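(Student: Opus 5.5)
The plan has two parts: first derive the kernel rate \eqref{eq:phi-kappa} by exchanging the two minimizations in \eqref{eq:posterior-rate-min-kappa}, then show the NNGP kernel fails a first-order optimality condition.

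\textbf{Step 1: exchange of minima.} Since the cost in \eqref{eq:posterior-rate-min-kappa} is jointly defined on pairs $(h,\kappa)$, we have
\[
\inf_h \inf_\kappa \{\cdots\} = \inf_\kappa \inf_h\{\cdots\}.
\]
For fixed $\kappa$, the inner problem over $h$ is exactly the GP problem \eqref{eq:gp-map}. Its minimizer is the posterior mean \eqref{eq:gp-posterior-mean-var}: on training points $h_D=\kappa_{DD}(\kappa_{DD}+\operatorname{Id})^{-1}y_D$, and test values are fixed by minimal RKHS norm. The test point is free, so it adds zero cost once $h$ lies in the range of $\kappa$.

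The minimal value is the standard kernel ridge identity
\[
\min_{h}\Big\{\tfrac12|h_D-y_D|^2+\tfrac12 h_D^\top\kappa_{DD}^{+}h_D\Big\}=\tfrac12 y_D^\top(\kappa_{DD}+\operatorname{Id})^{-1}y_D .
\]
To verify it, use $A=\kappa_{DD}$ and complete the square, or equivalently apply the Woodbury/push-through identity $\operatorname{Id}-A(A+\operatorname{Id})^{-1}=(A+\operatorname{Id})^{-1}$. Here I use that the RKHS norm of the restriction to $D$, minimized over extensions, is $h_D^\top \kappa_{DD}^+ h_D$. Adding $I^{(L-1)}(\kappa)$ gives \eqref{eq:phi-kappa}, up to the normalizing constant.

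\textbf{Step 2: $\kappa_0$ is not a minimizer.} Write $F(\kappa)=\tfrac12 y_D^\top(\kappa_{DD}+\operatorname{Id})^{-1}y_D$. Since $\kappa_{DD}+\operatorname{Id}\succeq \operatorname{Id}$ is invertible for every PSD $\kappa$, $F$ is smooth on all of $\mathbb S_+^m$. Its derivative in direction $\Delta$ is
\[
-\tfrac12 v^\top\Delta_{DD}v,\qquad v=(\kappa_{DD}+\operatorname{Id})^{-1}y_D .
\]
Because $y_D\neq0$, we have $v\neq0$. In the PSD direction $\Delta=\mathbf 1_D vv^\top$ (extended by zero), the derivative is $-\tfrac12|v|^4<0$. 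So $F$ strictly decreases along $\kappa_0+t\Delta$, which stays in $\mathbb S_+^m$ for $t\ge0$.

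On the other side, I need $I^{(L-1)}$ to be stationary at its minimizer $\kappa_0$, i.e. $I^{(L-1)}(\kappa_0+t\Delta)=o(t)$. I would get this from the structure in Proposition~\ref{thm:prior-kernel-ldp}. Each layer cost $J$ is a Legendre transform of a log-MGF that is smooth near $0$ under Gaussian weights and Lipschitz activations, so $J$ is locally quadratic around its zero. Using the recursion \eqref{eq:kernel-ldp-recursion}, with the choice $\kappa^{(\ell-1)}=\kappa_0^{(\ell-1)}$ as an upper bound, this gives $I^{(L-1)}(\kappa_0+t\Delta)\le J(\kappa_0+t\Delta\mid\kappa_0^{(L-2)})=O(t^2)$.

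Combining the two, $I^{(L-1)}(\kappa_0+t\Delta\mid\mathcal D)<I^{(L-1)}(\kappa_0\mid\mathcal D)$ for small $t>0$. Hence $\kappa_0$ is not a minimizer, so $\kappa^\star\neq\kappa_0$.

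\textbf{Main obstacle.} The hard part is the quadratic (or at least $o(t)$) vanishing of the layer cost $J$ near $\kappa_0$ in the chosen direction. It requires $\kappa_0+t\Delta$ to lie in the interior of the effective domain of $J(\cdot\mid\kappa_0^{(L-2)})$. That in turn needs nondegeneracy of the conditional covariance of the features $\sigma(h^{(L-1)})$, which could fail for degenerate inputs or activations.

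I would first try to prove the bound $J(\kappa\mid\cdot)\le C|\kappa-\kappa_0|^2$ from a local lower bound $\Lambda(\lambda)\ge \langle\lambda,\kappa_0\rangle - C'|\lambda|^2$ on the log-MGF, which holds by Taylor expansion. This works provided the direction $\Delta$ is in the span of the feature covariance. If it is not, I would instead choose $\Delta$ in that span with $v^\top\Delta_{DD}v>0$; such a $\Delta$ exists as long as the limiting features are non-constant on the training inputs.
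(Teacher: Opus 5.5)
Your Step~1 matches the paper's computation. The paper writes $h=\kappa_{\cdot D}\alpha$ via the representer theorem and solves $\alpha=(\kappa_{DD}+\operatorname{Id})^{-1}y_D$; you marginalize the RKHS norm to $D$ and complete the square, which is equivalent.

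Step~2 takes a genuinely different and more careful route. The paper only observes that $F(\kappa)=\tfrac12 y_D^\top(\kappa_{DD}+\operatorname{Id})^{-1}y_D$ decreases along PSD directions, so it is not constant near $\kappa_0$. From this it concludes that $\kappa_0$ cannot minimize $I^{(L-1)}+F$. That inference does not hold on its own. Adding a non-constant smooth term to a function uniquely minimized at $\kappa_0$ need not move the minimizer if that function grows linearly, or is $+\infty$, away from $\kappa_0$ in the relevant directions. Your argument compares a strictly negative directional derivative of $F$ with $I^{(L-1)}(\kappa_0+t\Delta)\le J(\kappa_0+t\Delta\mid\kappa_0^{(L-2)})=O(t^2)$. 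That comparison is what actually closes the proof. It also exposes a hypothesis the statement omits. For a constant (hence Lipschitz) activation, $K_n^{(L-1)}$ is deterministic, $I^{(L-1)}=+\infty$ off $\kappa_0$, and $\kappa^\star=\kappa_0$ for every $y_D$. So your route costs an explicit nondegeneracy assumption, but that assumption is needed. The paper's route is shorter but does not by itself establish the separation.

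Two points in your sketch need repair.

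\textbf{The sign in your last paragraph.} A bound $\Lambda(\lambda)\ge\langle\lambda,\kappa_0\rangle-C'|\lambda|^2$ gives no upper bound on $J=\Lambda^*$. What you need is local strong convexity on the subspace $V\subseteq\mathbb S^m$ spanned by $\operatorname{supp}(\Xi)-\kappa_0$. This is the range of $\operatorname{Cov}(\Xi)$ as an operator on matrices, not the $m\times m$ covariance of the feature vector. The argument then runs as follows.
\begin{itemize}
\item Taylor expansion gives $\Lambda(\lambda)\ge\langle\lambda,\kappa_0\rangle+c|\lambda|^2$ for $\lambda\in V$ with $|\lambda|\le r$.
\item Convexity extends this to $\Lambda(\lambda)\ge\langle\lambda,\kappa_0\rangle+cr|\lambda|$ for $|\lambda|\ge r$.
\item Components of $\lambda$ orthogonal to $V$ drop out, since $\langle\lambda_\perp,\Xi\rangle=\langle\lambda_\perp,\kappa_0\rangle$ a.s.\ and $\Delta\in V$.
\end{itemize}
Together these give $J(\kappa_0+t\Delta\mid\kappa_0^{(L-2)})\le t^2|\Delta|^2/(4c)$ for small $t$.

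\textbf{The criterion for your fallback direction.} The exact condition is that $v^\top\sigma(g_D)$ is not a.s.\ constant for $g\sim\mathcal N(0,\kappa_0^{(L-2)})$. Equivalently, the zero-extension of $vv^\top$ is not orthogonal to $V$. ``Features non-constant on the training inputs'' is not quite this. The resulting $\Delta\in V$ need not be PSD, but that is harmless. After possibly replacing $\Delta$ by $-\Delta$, the point $\kappa_0+t\Delta$ lies in the effective domain of $J$, hence in $\mathbb S_+^m$, for small $t>0$.
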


The posterior rate \eqref{eq:phi-kappa} is obtained by noticing that in the nested optimization leading to $y^*(x_{\mathrm{test}})$ by exchanging the order, i.e.\ minimizing first over $y$, for fixed $\kappa$ and $h(x_D) = y_D$, we reduce the problem to the fixed kernel case, hence $y^*(x_{\mathrm test})$ is the mean prediction \eqref{eq:gp-posterior-mean-var} and the loss contribution is also explicit. This can be intepreted as follows: the posterior rate selects a data-dependent kernel by trading off the kernel rarity cost $I^{(L-1)}(\kappa)$ against the standard kernel-regression fit term: by this mechanism feature learning appears at the LDP scale, beyond the rigidity of NNGP limits at the weak-convergence scale.

\section{Numerical Experiments}
\label{sec:numerics}

We demonstrate the theoretical predictions from the large-deviation framework and assess their relevance for finite-width neural networks, validating that the resulting rate functions are computable, interpretable, and predictive beyond Gaussian approximations.

Numerical experiments are conducted in Python using JAX, with the complete codebase shared on GitHub \cite{ldnn2026}. While our implementation may be extended to support various architectures, we focus on a shallow case (one hidden layer) for clarity, capturing key phenomena from Sections \ref{sec:overview}-\ref{sec:nn}. Computations are lightweight and run on standard CPUs, though caution is advised for numerical instabilities. Further diagnostics and stabilization strategies are discussed in Appendix \ref{app:numerics}.

\subsection*{Experiment 01: prior and posterior rate functions, LDP-MAP prediction}

\paragraph{01A: Prior rate function.} We compute the prior output large-deviation rate function $I_{\mathrm{prior}}$ on a one-dimensional output grid to illustrate its  geometry. We consider a single hidden layer network in the wide regime, and two activation functions:
$\sigma(z)=\mathrm{ReLU}(z)$ and $\sigma(z)=\tanh(z)$. Input and output dimensions are $d_{\mathrm{in}}=d_{\mathrm{out}}=1$. The test input is fixed at $x_{\mathrm{test}}=3$. We compute rate function $I_{\mathrm{prior}}(y)$ evaluated on a uniform grid $y\in[-1,4]$. 

The results are plotted in Figure~\ref{fig:exp01A_prior_rate}. While typical fluctuations are Gaussian at the NNGP scale, the rate function geometry depends strongly on the activation function (both in scale and in shape).
The ReLU activation, being unbounded, leads to a rate function with
markedly non-quadratic growth (see also \ref{app:explicit}), in contrast with the smoother behavior observed for bounded
activations.

\begin{figure}[t]
\centering
\begin{minipage}{0.48\linewidth}
\centering
\includegraphics[width=\linewidth,  keepaspectratio]{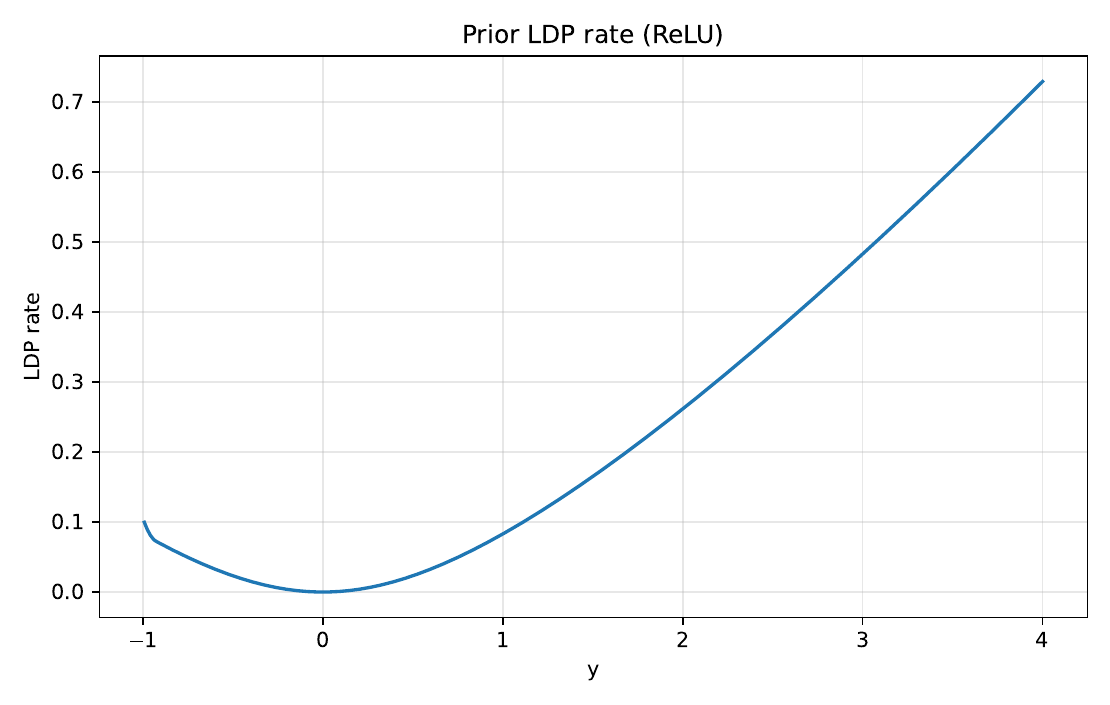}
\end{minipage}\hfill
\begin{minipage}{0.48\linewidth}
\centering
\includegraphics[width=\linewidth, keepaspectratio]{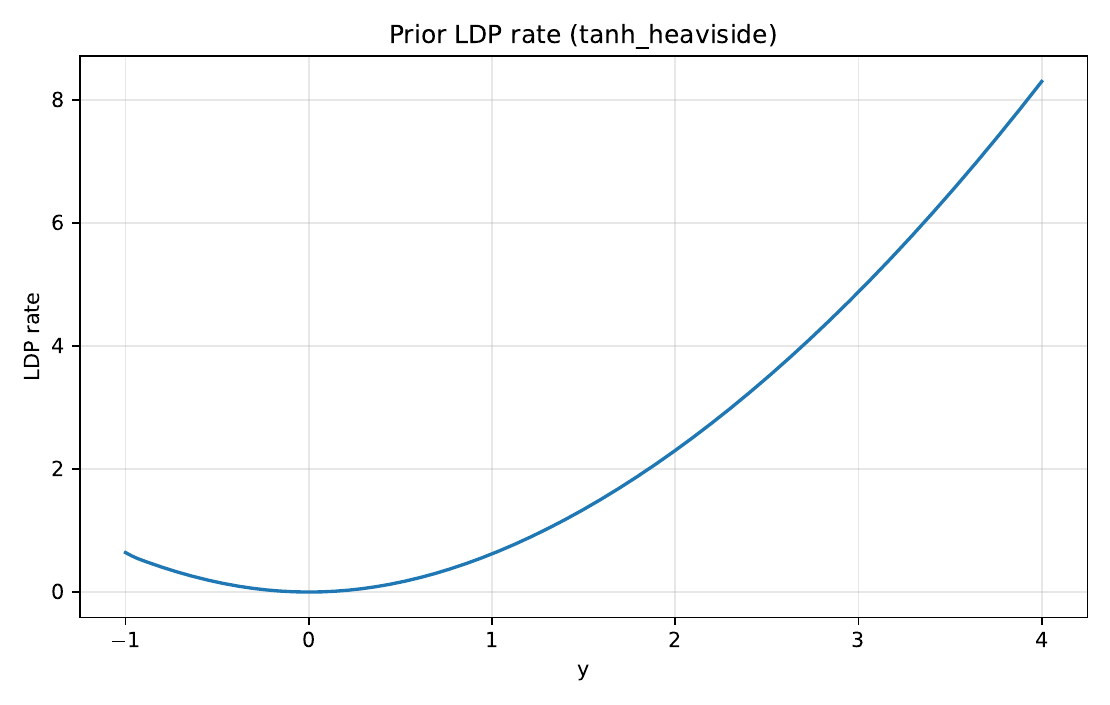}
\end{minipage}
\caption{\textbf{Prior output large-deviation rate functions.}
Prior rate  $I_{\mathrm{prior}}(y)$ as a function of the output $y$ for a wide
Gaussian neural network with ReLU (left) and $\tanh$ (right) activation.}
\label{fig:exp01A_prior_rate}
\end{figure}

\paragraph{01B: Posterior rate function.}
The goal is to compute the posterior output large-deviation rate function
$I_{\mathrm{post}}$ and illustrate how conditioning on data deforms the prior
large-deviation geometry.
We consider the same fully connected neural network with a single hidden layer and Gaussian
weights as in Experiment~01A. Again we plot both the case of ReLU activation function and $\tanh$, for comparison and consistency.
The training set consists of samples from a one-dimensional Heaviside target function,
evaluated at the inputs $x_{\mathrm{train}}\in\{-3,-2,-1,0,1,2\}$.
Input and output dimensions are $d_{\mathrm{in}}=d_{\mathrm{out}}=1$, and the test input is
fixed at $x_{\mathrm{test}}=3$. We are interested in the posterior output rate function $I_{\mathrm{post}}(y)$ induced by a quadratic loss, evaluated on the grid 
$y\in[-1,4]$ and compared with the corresponding prior rate function.

The results are shown in Figure~\ref{fig:exp01B_posterior_rate}. Conditioning on data induces a nontrivial deformation of the large-deviation landscape: the posterior rate function is shifted and dilated relative to the prior,
taking into account the contribution of the loss function.

\begin{figure}[t]
\centering
\begin{minipage}{0.48\linewidth}
\centering
\includegraphics[width=\linewidth,  keepaspectratio]{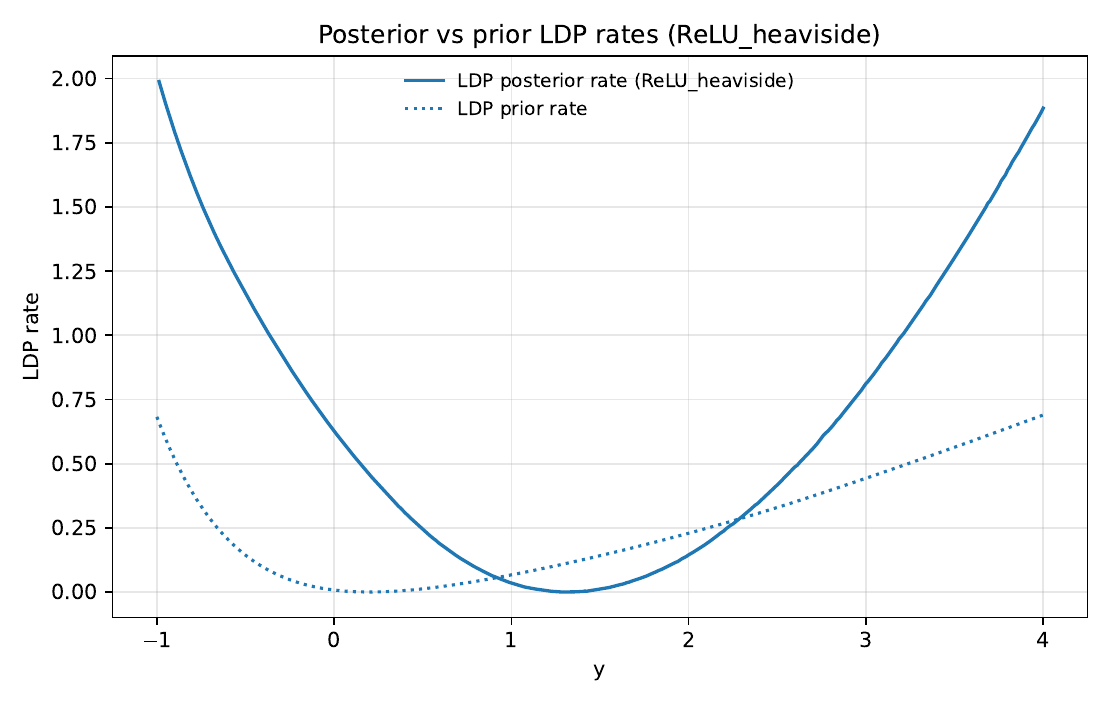}
\end{minipage}\hfill
\begin{minipage}{0.48\linewidth}
\centering
\includegraphics[width=\linewidth, keepaspectratio]{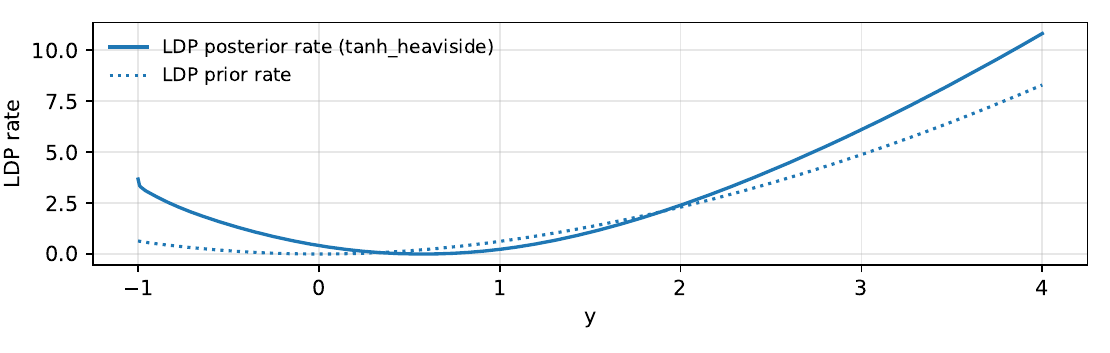}
\end{minipage}
\caption{\textbf{Posterior deformation of the output large-deviation rate function.}
Prior and posterior rate functions $I_{\mathrm{prior}}(y)$ and $I_{\mathrm{post}}(y)$ at a
fixed test input $x_{\mathrm{test}}=3$, for a wide Gaussian neural network (left, ReLU activation; right $\tanh$ activation) trained on a Heaviside target}
\label{fig:exp01B_posterior_rate}
\end{figure}

%----------------
\paragraph{01C: MAP prediction (mode) as a function of the input.}
We compute the large-deviation MAP prediction induced by the posterior rate function,
defined for each test input $x_{\mathrm{test}}$ the prediction $y^*(x_{\mathrm{test}})$ as in \eqref{eq:prediction}. %\emph{Architecture and training data.}
We consider the same shallow Gaussian neural network and training set as in
Experiment~01B, with ReLU and $\tanh$ activation functions.
%\emph{Inputs and outputs.}
Input and output dimensions are $d_{\mathrm{in}}=d_{\mathrm{out}}=1$.
The test input $x_{\mathrm{test}}$ is varied over a uniform grid
$x_{\mathrm{test}}\in[-4,4]$.
For each value of $x_{\mathrm{test}}$, the corresponding MAP prediction $y^\ast(x_{\mathrm{test}})$
is obtained by minimizing the posterior rate function with respect to the output variable.

he resulting prediction curves are shown in Figure~\ref{fig:exp01C_mode_curve}.
Although the posterior MAP predictor does not interpolate the training data exactly—an
expected effect given the quadratic loss and the limited size of the training set—it
captures the qualitative monotone transition of the target function from $0$ to $1$.
Differences induced by the activation function are clearly visible:
the ReLU activation produces a steeper transition and does not saturate at $1$, while the
$\tanh$ activation leads to a smoother profile with saturation effects.

\begin{figure}[t]
\centering
\begin{minipage}{0.48\linewidth}
\centering
\includegraphics[width=\linewidth]{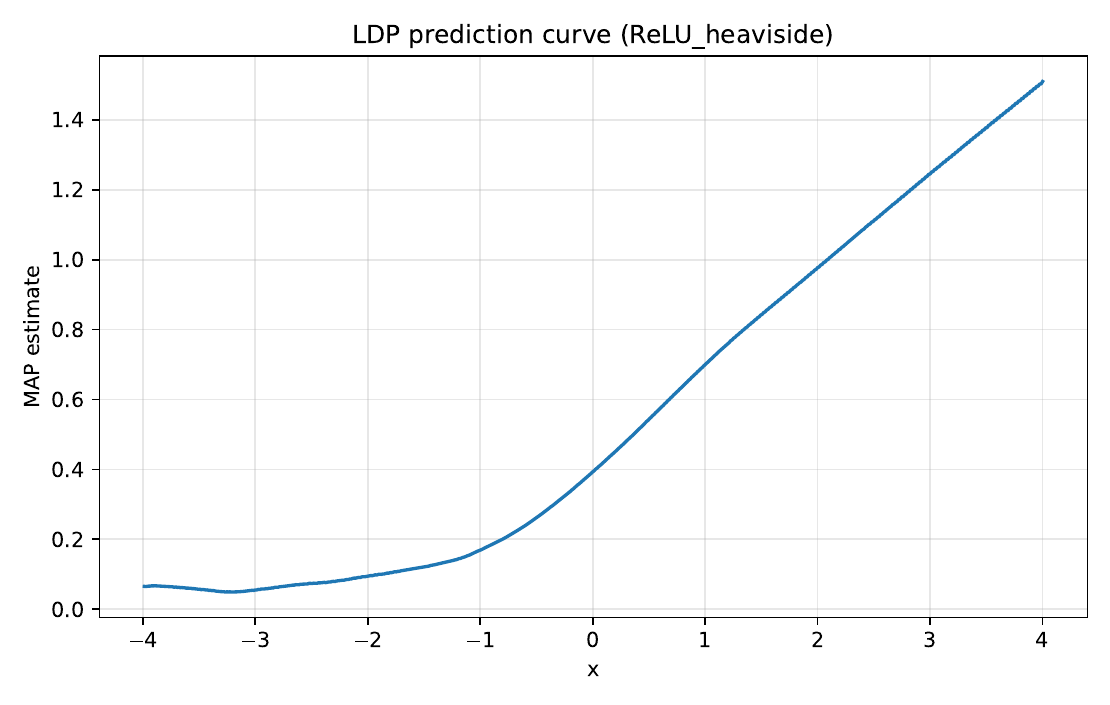}
\end{minipage}\hfill
\begin{minipage}{0.48\linewidth}
\centering
\includegraphics[width=\linewidth]{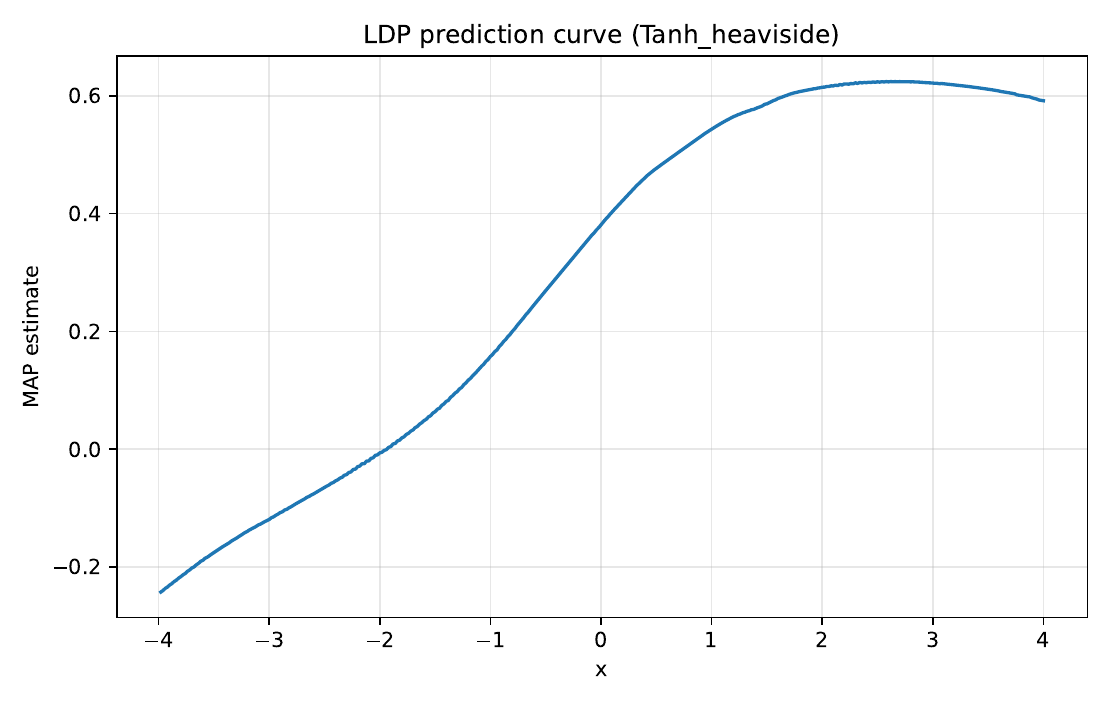}
\end{minipage}
\caption{\textbf{Posterior MAP prediction curves.}
Large-deviation MAP prediction $y^\ast(x_{\mathrm{test}})$ as a function of the test input
$x_{\mathrm{test}}$, for a wide Gaussian neural network trained on a Heaviside target.
Left: ReLU activation. Right: $\tanh$ activation.}
\label{fig:exp01C_mode_curve}
\end{figure}

% --------------------------

\subsection*{Experiment 02: LDP versus NNGP (Gaussian) predictions}
\label{subsec:exp02}

This experiment compares LDP predictions with the fixed-kernel NNGP approximation,
highlighting the regimes where Gaussian limits correctly describe typical behavior but
fail at the large-deviation scale.
\paragraph{02A: Prior rate --- LDP versus NNGP quadratic rate}

We compare the prior large-deviation rate function with the quadratic rate induced by the
Neural Network Gaussian Process (NNGP) limit. We consider the same architecture and input configuration as in Experiment~01A, restricting to the ReLU activation for simplicity. Input and output dimensions are $d_{\mathrm{in}}=d_{\mathrm{out}}=1$, and the test input is
fixed at $x_{\mathrm{test}}=3$.
We compute, on the same output grid $y\in[-1,4]$, the prior large-deviation rate function $I_{\mathrm{prior}}^{\mathrm{LDP}}(y)$ and the quadratic rate $I_{\mathrm{prior}}^{\mathrm{NNGP}}(y)
=
\frac12 \|y\|^2_{\kappa_0},
$ where $\kappa_0$ denotes the NNGP kernel. In addition, for each output value $y$ we compute the relative kernel gap between the kernel
$\kappa^\star(y)\in\arg\min_\kappa I_{\mathrm{prior}}^{\mathrm{LDP}}(y)$ selected by the LDP
variational problem and the NNGP kernel $\kappa_0$, defined as $\|\kappa^\star(y)-\kappa_0\|_{\mathrm{op}}/{\|\kappa_0\|_{\mathrm{op}}}$.

The results are shown in Figure~\ref{fig:exp02A_prior_ldp_vs_nngp}.
The left panel displays the prior LDP rate function together with the quadratic NNGP rate,
while the right panel shows the corresponding relative kernel gap as a function of the
output value. As expected, the two rates coincide around $y=0$ (up to small numerical imprecisions), which
is the unique minimizer of the prior rate function. Away from this typical regime, the LDP rate departs from the quadratic approximation and the
kernel gap becomes non-zero, indicating that rare output fluctuations are governed by
non-Gaussian kernel configurations.

\begin{figure}[t]
\centering
\begin{minipage}{0.48\linewidth}
\centering
\includegraphics[width=\linewidth]{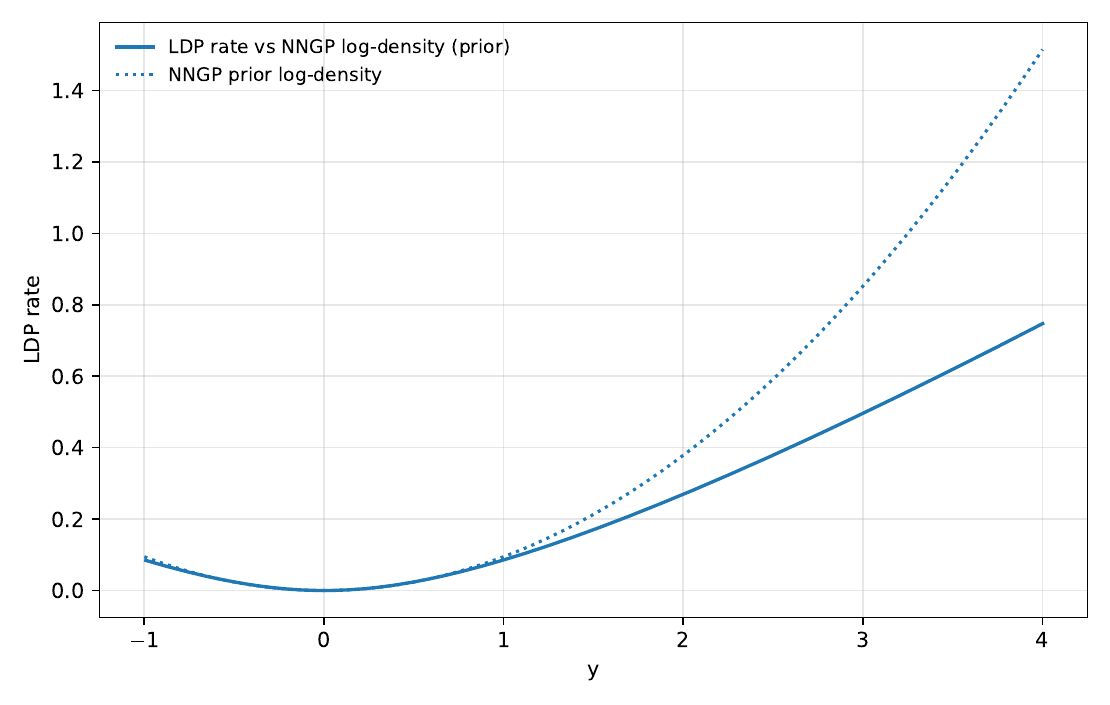}
\end{minipage}\hfill
\begin{minipage}{0.48\linewidth}
\centering
\includegraphics[width=\linewidth]{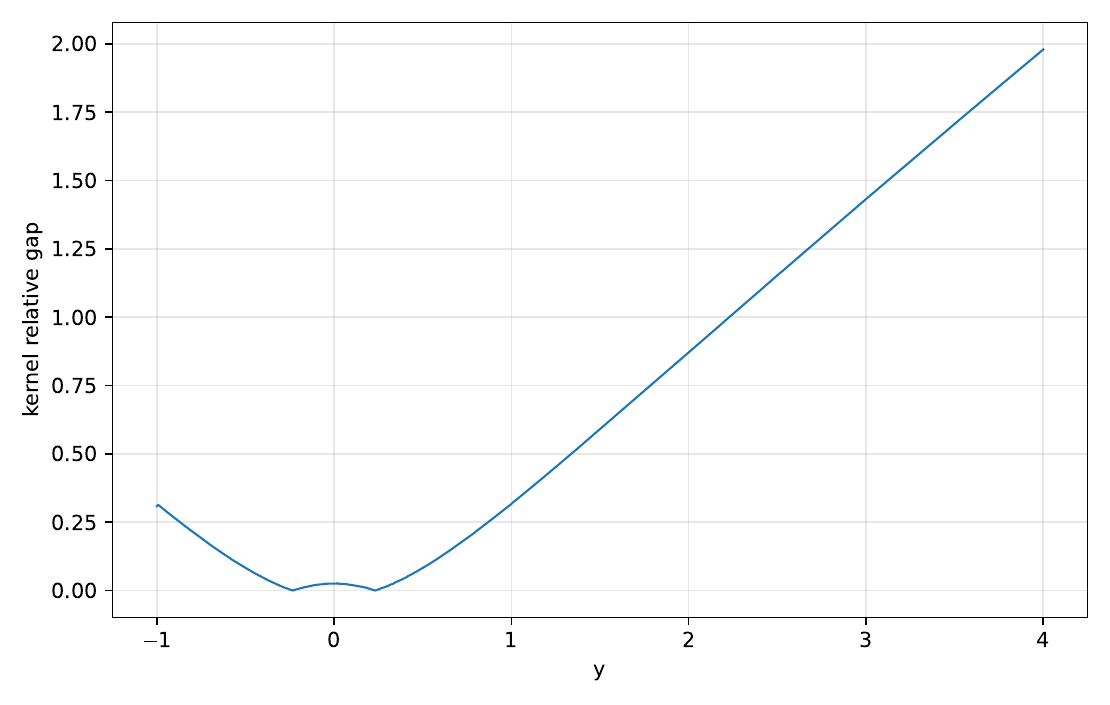}
\end{minipage}
\caption{\textbf{Prior LDP versus NNGP.}
Left: prior large-deviation rate function compared with the quadratic rate induced by the
NNGP kernel.
Right: relative operator-norm gap between the kernel selected by the LDP variational
problem and the NNGP kernel.}
\label{fig:exp02A_prior_ldp_vs_nngp}
\end{figure}

\paragraph*{02B: Posterior rate --- LDP versus NNGP posterior}

We compare the posterior large-deviation rate function with the posterior rate induced by
Gaussian-process regression using the fixed NNGP kernel. The goal is to assess that Bayesian conditioning does not attenuate non-Gaussian behavior at the large-deviation scale. We consider the same shallow Gaussian neural network architecture and training set as in Experiment~01B, restricting again to the ReLU activation for simplicity.
%Input and output dimensions are $d_{\mathrm{in}}=d_{\mathrm{out}}=1$, and the test input is fixed at $x_{\mathrm{test}}=3$.
On a common output grid $y\in[0,2]$, we compute,  the posterior large-deviation rate function  and the posterior quadratic rate induced by Gaussian-process regression with the fixed
NNGP kernel. In addition, for each output value $y$ we compute the relative kernel gap similarly as in Experiment 02A. %\emph{Output.}

The results are shown in Figure~\ref{fig:exp02B_post_ldp_vs_nngp}.
The left panel displays the posterior LDP rate function together with the posterior quadratic
rate induced by the NNGP kernel, while the right panel shows the corresponding relative kernel
gap. Unlike the prior case in Experiment~02A, the kernel gap is non-zero throughout the output
domain. This indicates that posterior concentration selects kernel configurations that differ from
the NNGP kernel at all output values, including near the posterior mode.
As a result, the posterior large-deviation geometry cannot be captured by a fixed-kernel
Gaussian approximation.
This behavior reflects genuine feature-learning effects that are absent in the NNGP theory. 
%even though the latter correctly describes typical fluctuations.

\begin{figure}[t]
\centering
\begin{minipage}{0.48\linewidth}
\centering
\includegraphics[width=\linewidth]{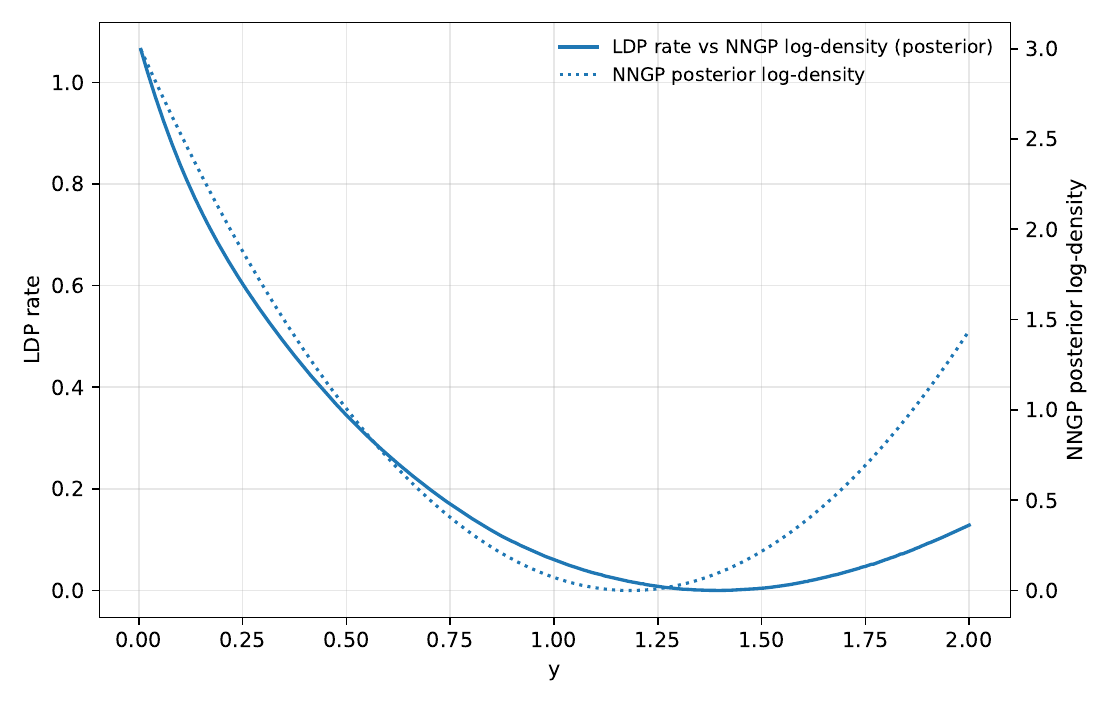}
\end{minipage}\hfill
\begin{minipage}{0.48\linewidth}
\centering
\includegraphics[width=\linewidth]{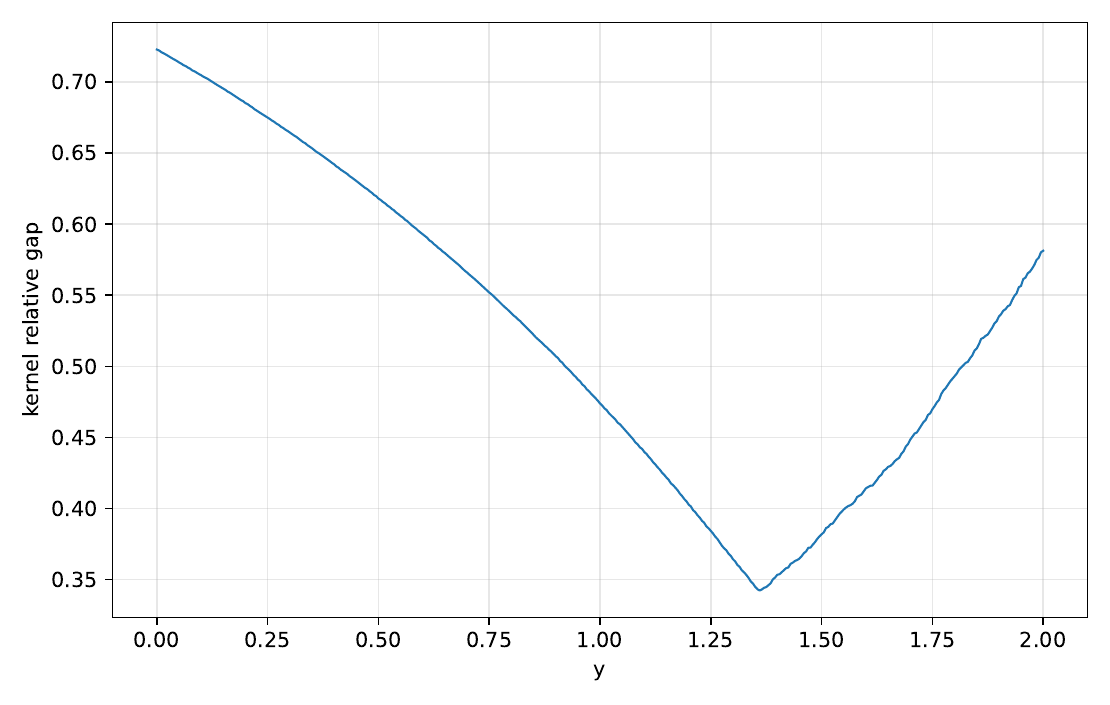}
\end{minipage}
\caption{\textbf{Posterior LDP versus NNGP.}
Left: posterior large-deviation rate function compared with the quadratic posterior rate
induced by Gaussian-process regression with the NNGP kernel.
Right: relative operator-norm gap between the kernel selected by the posterior LDP variational
problem and the NNGP kernel.}
\label{fig:exp02B_post_ldp_vs_nngp}
\end{figure}

\paragraph{02C: Predictive curve --- LDP-MAP versus NNGP posterior mean}

We compare the predictive behavior induced by the large-deviation posterior with that of the
Gaussian-process approximation based on the fixed NNGP kernel.
Specifically, we contrast the LDP-MAP predictor with the posterior mean of the NNGP
regression model. We consider the same shallow (ReLU) Gaussian neural network architecture, training set, and loss
function as in Experiments~01C and~02B, restricting again to the ReLU activation. For each test input $x_{\mathrm{test}} \in [-4, 4]$, we compute the large-deviation MAP prediction and the posterior mean prediction induced by Gaussian-process regression with the fixed NNGP kernel. In addition, along the same grid of test inputs, we compute a kernel-gap diagnostic
quantifying the deviation between the kernel selected by the LDP posterior and the NNGP
kernel. 

The results are shown in Figure~\ref{fig:exp02C_pred_curve_ldp_vs_nngp}. 
The two predictors agree near regions where the posterior geometry remains close to the
Gaussian regime. However, systematic discrepancies appear as the test input moves away from the training
set, in correspondence with non-negligible kernel gaps.
This confirms that predictive differences between LDP and NNGP formulations are directly
associated with kernel-selection effects arising at the large-deviation scale, rather than
with finite-width noise.

\begin{figure}[t]
\centering
\begin{minipage}{0.48\linewidth}
\centering
\includegraphics[width=\linewidth]{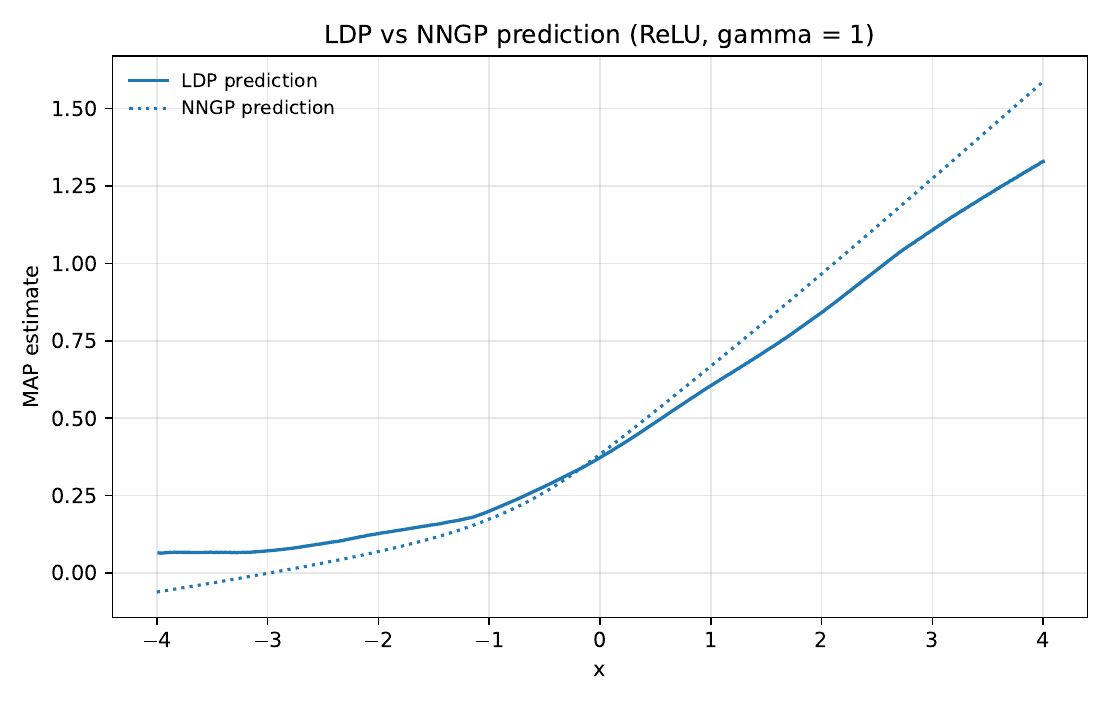}
\end{minipage}\hfill
\begin{minipage}{0.48\linewidth}
\centering
\includegraphics[width=\linewidth]{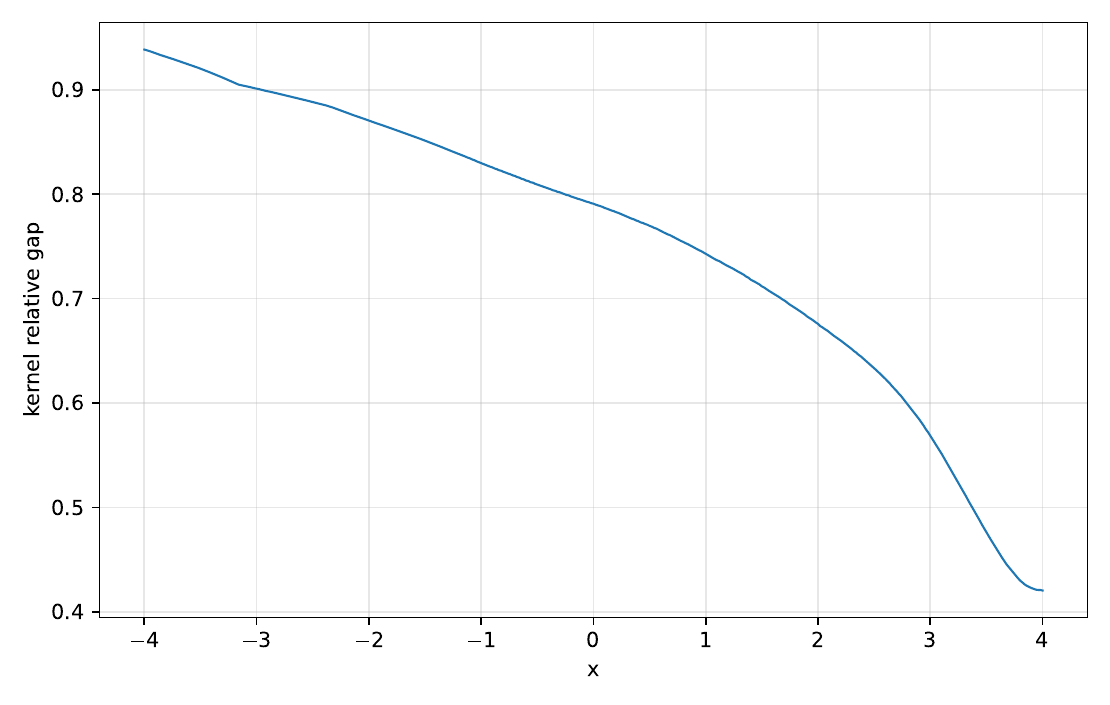}
\end{minipage}
\caption{\textbf{Predictive curves: LDP-MAP versus NNGP.}
Left: large-deviation MAP prediction compared with the NNGP posterior mean as a function of
the test input.
Right: kernel-gap diagnostic along the same input grid, quantifying deviations from the
fixed NNGP kernel.}
\label{fig:exp02C_pred_curve_ldp_vs_nngp}
\end{figure}

% --------------------------
\subsection{Experiment 03: finite-width validation via Monte Carlo}
\label{subsec:exp03}

We finally compare LDP predictions to Monte Carlo simulations of finite-width networks.
The focus is on validating large-deviation scaling for the prior and posterior
concentration 

\paragraph{03A: prior tails --- empirical decay vs LDP rate}

We verify that the LDP prior rate function governs the decay of empirical tail probabilities
as width increases. We sample from finite-width randomly LeCun initialized networks at widths $n\in\{32,64,128,256\}$ and empirically estimate the tail probabilities $\hat p_n(y)=\PP( \tfrac{1}{\sqrt{n}} h_\theta^n(x_{\mathrm{test}}) \ge y)$ (for positive $y$, and $\le y$ for negative side). We then compare empirical rates $-\tfrac 1 n\log \hat p_n(y)$ with $I_{\mathrm{prior}}(y)$.
The total number of Monte Carlo samples is $10^7$ for each width, and when no samples are observed in the tail region for a given width and threshold, the corresponding empirical probability is not reported.  The results are shown in Figure~\ref{fig:exp03A_prior_mc_vs_ldp}, panel left. The left panel displays empirical tail rates for increasing widths together with the
theoretical LDP prior rate function. As the width increases, the empirical tail rates progressively align with the LDP
prediction, providing evidence that the prior output distribution obeys the large-deviation
scaling predicted by the theory.

\begin{figure}[t]
\centering
\begin{minipage}{0.48\linewidth}
\centering
\includegraphics[width=\linewidth]{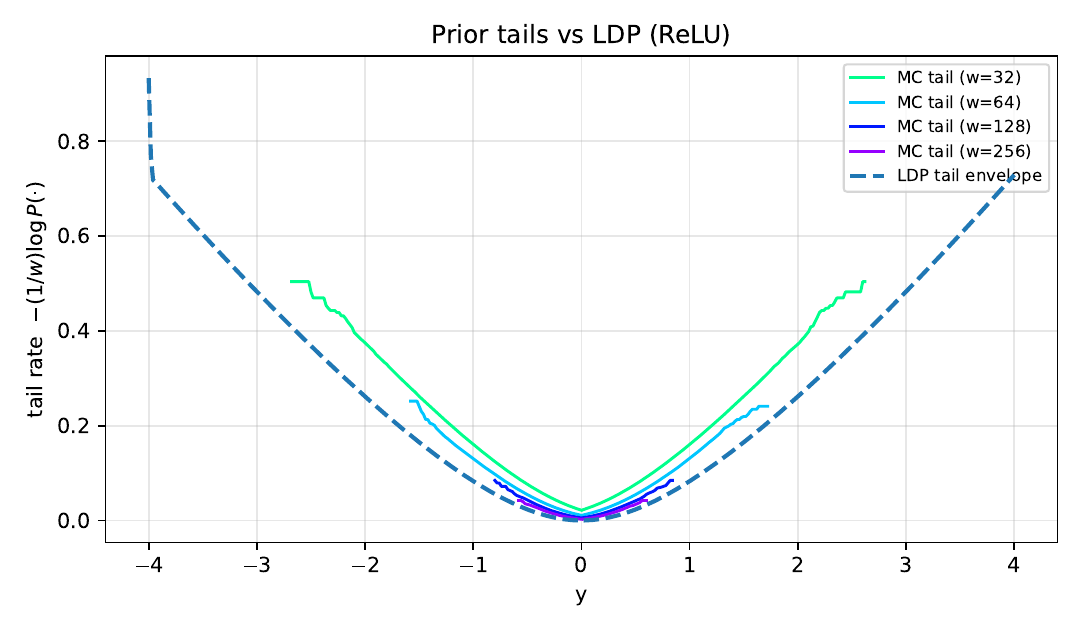}
\end{minipage}\hfill
\begin{minipage}{0.48\linewidth}
\centering
\includegraphics[width=\linewidth]{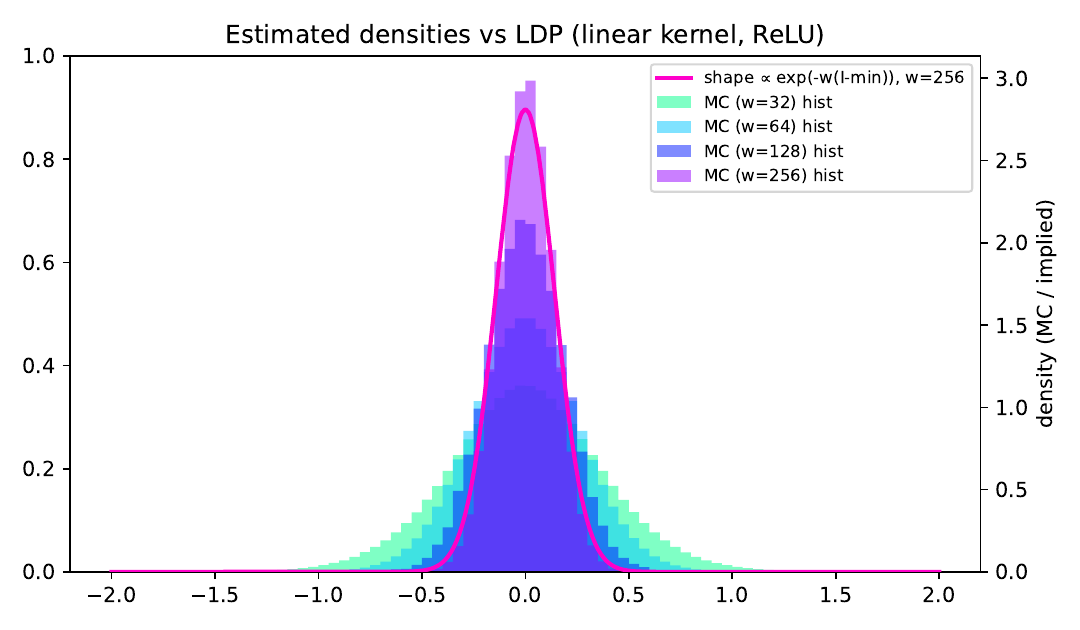}
\end{minipage}
\caption{\textbf{Prior tails: Monte Carlo versus LDP.}
Left: empirical tail decay rates $-\tfrac1n\log \hat p_n(y)$ for increasing widths compared
with the prior large-deviation rate function.
Right: empirical output histograms compared with the heuristic density
$\propto \exp(-n\,I_{\mathrm{prior}}(y))$.}
\label{fig:exp03A_prior_mc_vs_ldp}
\end{figure}

\paragraph{03B: Posterior samples --- MC vs LDP and NNGP.}
We compare finite-width posterior samples with the posterior geometry predicted by the
large-deviation principle and by the fixed-kernel NNGP approximation. We consider the same shallow Gaussian neural network with ReLU activation and the Heaviside training set as in Experiments~01B--02C.
Finite-width posterior samples are generated at width $n=128$ using a \emph{$n$-tempered likelihood}, so that the posterior concentrates at the large-deviation scale.
Sampling is performed using the Metropolis-adjusted Langevin algorithm (MALA), which we
found to provide more reliable convergence than SGLD in this low-dimensional setting.
Diagnostics and additional implementation details are reported in
Appendix~\ref{app:numerics}. Samples of the output $h_\theta(x_{\mathrm{test}})$ are collected at a fixed test input
$x_{\mathrm{test}}=5$, which is chosen to exhibit wider discrepancy between NNGP and LDP modes.

We compare two scalings: (i) the LDP scaling, for which the posterior rate function $\mathcal I_{\mathrm{post}}$
predicts concentration at speed $n$ of posterior samples of $\tfrac{1}{\sqrt{n}} h_\theta^n(x_{\mathrm{test}})$;  (ii) the standard NNGP scaling, corresponding to fixed-kernel Gaussian-process regression. We overlay the posterior LDP rate function (shown as an energy curve) and the NNGP posterior mean prediction.

Under LDP scaling, the empirical posterior exhibits strong concentration around a mode that
is accurately predicted by the LDP-MAP estimator.
Numerically, we observe mean$=1.936$, std$=0.187$, with an average MALA acceptance rate of $0.75$ across chains.
In contrast, under standard NNGP scaling the posterior distribution has mean$=1.879$, std$=1.977$, hence shifted and much broader (with acceptance rate $0.82$). The results are shown in Figure~\ref{fig:exp03B_post_mc_vs_ldp_nngp}.

\begin{figure}[t]
\centering
\vspace{-2em}
\begin{minipage}{0.48\linewidth}
\centering
\includegraphics[width=\linewidth]{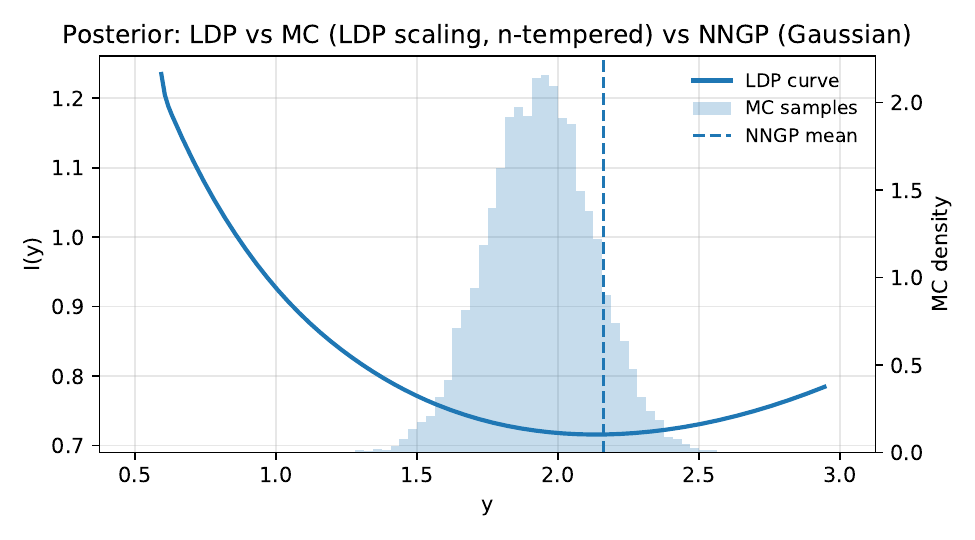}
\end{minipage}\hfill
\begin{minipage}{0.48\linewidth}
\centering
\includegraphics[width=\linewidth]{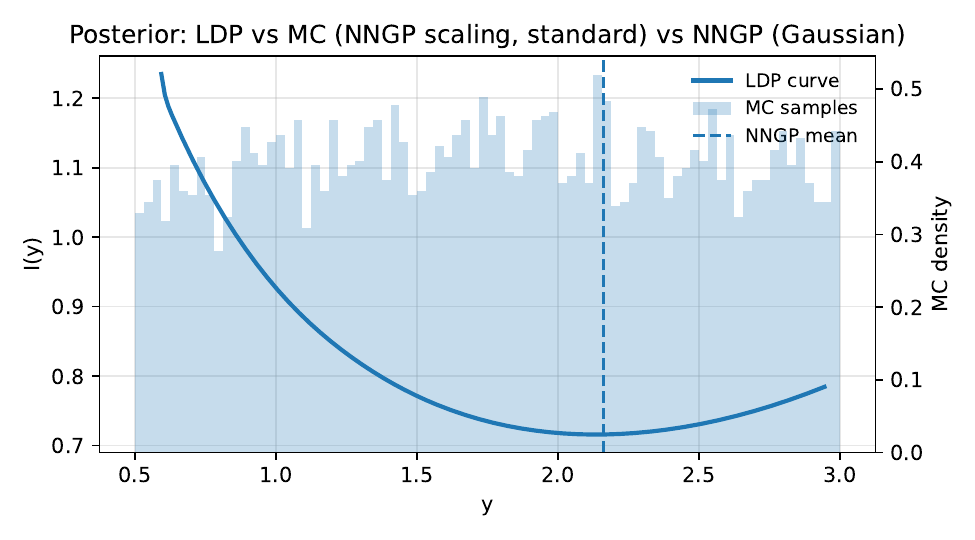}
\end{minipage}
\caption{\textbf{Posterior samples: MC vs LDP and NNGP.}
Finite-width posterior samples at $n=128$ for a ReLU network trained on a Heaviside target.
Left: $n$-tempered (LDP) scaling, showing concentration around the LDP-MAP prediction.
Right: standard NNGP scaling on the same $x$-axis, showing much broader Gaussian
fluctuations centered at the NNGP posterior mean.}
\label{fig:exp03B_post_mc_vs_ldp_nngp}
\end{figure}

\section{Conclusion}
We have shown that large-deviations provide a principled variational framework
for understanding Bayesian learning in wide neural networks beyond Gaussian-process limits.
Our numerics demonstrate that these effects are visible at moderate widths. Natural directions for future work include scaling the numerical framework to larger datasets,
systematic benchmarking against fixed-kernel and Bayesian neural network baselines, and
extending the theoretical analysis to more structured architectures.

\section{Acknowledgments}
D.T.\ thanks the HPC Italian National Centre for HPC, Big Data and Quantum Computing - Proposal code CN1 CN000\-00\-013, CUP I53C22000690001, the PRIN 2022 Italian grant 2022{-}WHZ5XH - ``understanding the LEarning process of QUantum Neural networks (LeQun)'', CUP J53D23003890006, the project  G24-202 ``Variational methods for geometric and optimal matching problems'' funded by Università Italo Francese.  D.T.\ and K.P.\ acknowledge the partial support of the project PNRR - M4C2 - Investimento 1.3, Partenariato Esteso PE00000013 - ``FAIR - Future Artificial Intelligence Research'' - Spoke 1 ``Human-centered AI'', funded by the European Commission under the NextGeneration EU programme, and  the MUR Excellence Department Project awarded to the Department of Mathematics, University of Pisa, CUP I57G22000700001,

%print bibilography with natbib file deep_learning.bib
\vfill

\bibliographystyle{plainnat}
%\nocite{*}
\bibliography{deep_learning}

\appendix

\section{A short guide to large-deviation calculus}
\label{app:ld}

Large deviations theory provides a systematic description of \emph{rare events}, namely
events whose probabilities decay exponentially with a scale parameter.
Throughout this appendix, we present an informal and calculus-oriented introduction,
mirroring familiar notions from elementary probability theory.
Standard references include \cite{dembo2009large,ellis2012entropy}.

\paragraph{Large deviation principle.}
Let $(Z_n)_{n\ge1}$ be random variables taking values in a fixed Euclidean space $\R^d$.
We say that $(Z_n)$ satisfies a \emph{large deviation principle} (LDP) with speed
$\beta_n \to \infty$ and rate function $\r:\R^d\to[0,\infty]$
(lower semicontinuous with compact level sets) if, for every Borel set $A$,
\begin{equation*}
\inf_{z\in A^\circ} \r(z)
\;\le\;
\liminf_{n\to\infty} -\frac1{\beta_n}\log \mathbb{P}(Z_n\in A)
\;\le\;
\limsup_{n\to\infty} -\frac1{\beta_n}\log \mathbb{P}(Z_n\in A)
\;\le\;
\inf_{z\in\overline A} \r(z),
\end{equation*}
where $A^\circ$ and $\overline A$ denote the interior and closure of $A$.
Heuristically, this is often summarized as
\[
\mathbb{P}(Z_n\in A)\;\approx\; \exp\!\bigl(-\beta_n\,\r(A)\bigr),
\qquad
\r(A):=\inf_{z\in A}\r(z).
\]

The rate function thus provides a ``skeleton'' of the probability law,
encoding which configurations are exponentially more likely than others.

\emph{Idempotent viewpoint.} A useful intuition is that large deviations obey the algebra of
\emph{idempotent (or tropical) probability}, where addition and multiplication
are replaced by $\min$ and $+$.
This perspective becomes transparent through the Laplace--Varadhan principle:
for every bounded continuous $\varphi:\R^d\to\R$,
\begin{equation}
\label{eq:varadhan}
\lim_{n\to\infty}
-\frac1{\beta_n}\log
\mathbb E\!\left[\exp\!\bigl(-\beta_n \varphi(Z_n)\bigr)\right]
=
\min_{z\in\R^d}\bigl\{\varphi(z)+\r(z)\bigr\}.
\end{equation}
The right-hand side is the idempotent analogue of an expectation.

Motivated by this analogy, it is convenient to introduce a formal
\emph{idempotent random variable} $Z$ and write $\r(Z=z)=\r(z)$.
We may then define  the idempotent expectation of $\varphi(Z)$ as follows
\[
\min_{z}\bigl\{\varphi(z)+\r(Z=z)\bigr\}.
\]

\begin{definition}[LDP convergence]
We say that $Z_n$ converges to $Z$ in the LDP sense with speed $\beta_n$,
and write $Z_n \stackrel{\beta_n}{\to} Z$, if for every bounded continuous $\varphi$,
\begin{equation}
\label{eq:ldp-conv}
\lim_{n\to\infty}
-\frac1{\beta_n}\log
\mathbb E\!\left[\exp\!\bigl(-\beta_n \varphi(Z_n)\bigr)\right]
=
\min_{z}\bigl\{\varphi(z)+\r(Z=z)\bigr\}.
\end{equation}
\end{definition}

This notion plays the role of convergence in law for rare-event asymptotics.

\paragraph{Basic calculus rules.}
Many standard probabilistic constructions admit direct idempotent analogues.

\begin{enumerate}
\item \emph{Normalization.}
Setting $\varphi\equiv0$ in \eqref{eq:ldp-conv} yields
$\min_z \r(z)=0$, the idempotent analogue of total mass one.
Accordingly, writing $\r(Z)\propto f$ means
\[
\r(Z=z)=f(z)-\min_u f(u),
\]
mirroring normalization of probability densities.

\item \emph{Change of measure.}
If $Z_n\stackrel{\beta_n}{\to} Z$ and we tilt the law of $Z_n$ by
$\exp(-\beta_n\ell(z))$, where $\ell$ is bounded and continuous, then
the LDP still holds with modified rate
\[
\tilde\r(Z=z)\propto \r(Z=z)+\ell(z).
\]
This is the LDP analogue of stability of weak convergence under bounded
Radon--Nikodym derivatives.

\item \emph{Contraction principle.}
If $Z_n\stackrel{\beta_n}{\to} Z$ and $g:\R^d\to\R^m$ is continuous, then
$g(Z_n)\stackrel{\beta_n}{\to} g(Z)$ with rate
\[
\r\bigl(g(Z)=y\bigr)
=
\min_{z:\,g(z)=y}\r(Z=z).
\]
This replaces marginalization by minimization.

\item \emph{Conditional LDP (Chaganty-type).}
If $X_n\stackrel{\beta_n}{\to}X$ and $Y_n$ satisfies an LDP conditionally on
$X_n=x_n$ for every $x_n\to x$, then $(X_n,Y_n)\stackrel{\beta_n}{\to}(X,Y)$ with
\[
\r(X=x,Y=y)=\r(X=x)+\r(Y=y\mid X=x).
\]

\item \emph{G\"artner--Ellis theorem.}
If the limit
\[
\hat\r(\lambda)
=
\lim_{n\to\infty}
\frac1{\beta_n}\log
\mathbb E\!\left[e^{\beta_n\langle\lambda,Z_n\rangle}\right]
\]
exists, is finite in a neighborhood of $0$, and is differentiable there,
then $Z_n\stackrel{\beta_n}{\to} Z$ with rate
\[
\r(Z=z)=\hat\r^{\,*}(z)
=
\sup_{\lambda\in\R^d}\bigl\{\langle\lambda,z\rangle-\hat\r(\lambda)\bigr\}.
\]
This is the LDP analogue of Lévy’s continuity theorem.

\item \emph{Cram\'er’s theorem.}
For empirical means $\bar X_n=\frac1n\sum_{i=1}^n X_i$ of i.i.d.\ variables, with MGF $\mathbb E\bigl[e^{\langle\lambda,X\rangle}\bigr]$ finite in a neighbourhood of $\lambda = 0$,
\[
\r(\bar X=x)
=
\sup_{\lambda}\Bigl\{\langle\lambda,x\rangle
-
\log\mathbb E\bigl[e^{\langle\lambda,X\rangle}\bigr]\Bigr\},
\]
the rare-event counterpart of the law of large numbers and central limit theorem.
\end{enumerate}

\vspace{0.5em}
Taken together, these rules form a compact ``large-deviation calculus''
in which probabilistic operations are replaced by variational ones.
This viewpoint underlies the variational formulations used throughout the paper.

% ============================================================
\section{Proofs of the main large-deviation statements}
\label{app:proof}

This appendix collects concise proofs (or proof sketches when the result is taken from the
literature) for the statements in the main body. Throughout we work on a fixed finite input
set $\cX$ (containing training and test inputs) and identify kernels with PSD matrices in
$\mathbb S_+^{|\cX|}$. We use the ``large-deviation calculus'' notation from
Appendix~\ref{app:ld}: for a sequence $(Z_n)_n$ we write $Z_n \stackrel{\beta_n}{\to} Z$ and
$\mathsf r(Z=z)$ for the associated rate function.

% ------------------------------------------------------------
\subsection{Prior LDP for layerwise kernels (sketch from \cite{macci2024large})}
\label{sec:proof-prior-kernel}

We recall the kernel recursion for Gaussian MLPs. For $\ell\ge 1$, define the empirical
kernel at layer $\ell$ on $\cX$ by
\[
K_n^{(\ell)}(\cX)
:= \frac1n\sum_{j=1}^n
\bigl(\sigma^{(\ell)}(h^{(\ell)}_j(\cX))\bigr)^{\otimes 2}
\in \mathbb S_+^{|\cX|},
\]
where $h^{(\ell)}_j(\cX)\in\R^{|\cX|}$ denotes the $j$-th neuron pre-activation vector
restricted to $\cX$ and $v^{\otimes 2}= v v^\top$. Under LeCun scaling, conditional on $K_n^{(\ell-1)}(\cX)=\kappa$,
the vectors $h^{(\ell)}_j(\cX)$ are i.i.d.\ Gaussian with covariance $\kappa$. In particular,
\begin{equation}\label{eq:cond-kernel-iid}
K_n^{(\ell)}(\cX)\;\Big|\; \{K_n^{(\ell-1)}(\cX)=\kappa\}
\;\stackrel{\text{law}}{=}\;
\frac1n\sum_{j=1}^n \, \Xi^{(\ell)}_j(\kappa),
\qquad
\Xi^{(\ell)}_j(\kappa)
:= \bigl(\sigma^{(\ell)}(\sqrt{\kappa}\,N_j)\bigr)^{\otimes 2},
\end{equation}
with $(N_j)_j$ i.i.d.\ $\cN(0,I_{|\cX|})$.

\emph{Conditional LDP and the layer cost.}
For fixed $\kappa$, \eqref{eq:cond-kernel-iid} is an empirical mean of i.i.d.\ matrices,
hence Cram\'er/G\"artner--Ellis yields an LDP at speed $n$ with conditional rate given by
the Legendre--Fenchel transform of the conditional log-MGF:
\begin{equation}\label{eq:def-Jsigma}
J^{\sigma^{(\ell)}}(\kappa' \mid \kappa)
:=
\sup_{\Lambda\in\mathbb S^{|\cX|}}
\Big\{
\langle \Lambda,\kappa'\rangle
-
\log\mathbb E\big[\exp\langle \Lambda, \Xi^{(\ell)}_1(\kappa)\rangle\big]
\Big\},
\qquad \kappa'\in\mathbb S_+^{|\cX|}.
\end{equation}

\emph{Induction and joint LDP.}
Assume inductively that $K_n^{(\ell-1)}(\cX)\stackrel{n}{\to}K^{(\ell-1)}(\cX)$ with
rate $I^{(\ell-1)}$. Under mild regularity, the conditional LDP in $\kappa$ above is
\emph{continuous} in the sense of Chaganty’s theorem (Appendix~\ref{app:ld}), and yields a
joint LDP for $\bigl(K_n^{(\ell-1)}(\cX),K_n^{(\ell)}(\cX)\bigr)$ with joint rate
$I^{(\ell-1)}(\kappa)+J^{\sigma^{(\ell)}}(\kappa'\mid \kappa)$.
By contraction onto the second coordinate we obtain the recursion
\begin{equation}\label{eq:kernel-ldp-recursion-proof}
I^{(\ell)}(\kappa')
=
\inf_{\kappa\in\mathbb S_+^{|\cX|}}
\Big\{ I^{(\ell-1)}(\kappa) + J^{\sigma^{(\ell)}}(\kappa'\mid \kappa)\Big\},
\end{equation}
which is \eqref{eq:kernel-ldp-recursion}. This argument (and the required regularity
conditions) is carried out in detail in \cite{macci2024large}, see also \cite{andreis2025ldp}.
In particular, the unique minimizer is the NNGP kernel $\kappa_0^{(\ell)}$, characterized
by the law-of-large-numbers fixed-point recursion.

% ------------------------------------------------------------
\subsection{Prior LDP for outputs (sketch from \cite{macci2024large})}
\label{sec:proof-prior-output}

Let
\[
H_n := \frac{1}{\sqrt n}\bigl(h_\theta(x)\bigr)_{x\in\cX}\in(\R^{d_{\mathrm{out}}})^{|\cX|}.
\]
Conditional on $K_n^{(L-1)}(\cX)=\kappa$, the output is Gaussian with covariance $\kappa$
(componentwise in $\R^{d_{\mathrm{out}}}$). Consequently, conditional large deviations of
$H_n$ have the quadratic rate
\begin{equation}\label{eq:cond-output-rate}
\mathsf r\bigl(H = h \,\big|\, K^{(L-1)}=\kappa \bigr)
=
\frac12\|h\|_\kappa^2,
\end{equation}
where $\|\cdot\|_\kappa$ denotes the (finite-dimensional) RKHS norm induced by $\kappa$ on
$\cX$, applied componentwise.

Combining \eqref{eq:cond-output-rate} with the kernel LDP at layer $L-1$ and applying
Chaganty's theorem plus contraction yields the unconditional output rate
\begin{equation}\label{eq:prior-output-minkappa-proof}
I_{\mathrm{prior}}^{(L)}(h)
=
\inf_{\kappa\in\mathbb S_+^{|\cX|}}
\Big\{ \tfrac12\|h\|_\kappa^2 + I^{(L-1)}(\kappa)\Big\},
\end{equation}
which is \eqref{eq:output-rate-min-kappa}. A complete proof appears in \cite{macci2024large}.

% ------------------------------------------------------------
\subsection{Posterior LDP under tempered quadratic loss}
\label{sec:proof-posterior}

We now prove Theorem~\ref{thm:posterior-output-ldp} from Theorem~\ref{thm:prior-output-ldp}
by a change-of-measure argument. %This is the functional counterpart of the standard ``Bayes rule for LDP'' stated in Appendix~\ref{app:ld}.

\begin{lemma}[Change of measure / Bayes rule for LDP]\label{lem:bayes-proof}
Let $Z_n$ be random variables taking values in $\R^d$ for some $d \ge 1$, and assume
$Z_n\stackrel{\beta_n}{\to} Z$ with  rate function $\mathsf r(Z=\cdot)$. Let
$\ell:\R^d\to\R$ be bounded and continuous and define the tilted laws
\[
\frac{d\widetilde{\PP}_n}{d\PP_n}(z)
=
\frac{\exp\{-\beta_n \ell(z)\}}{\EE_{\PP_n}[\exp\{-\beta_n \ell(Z_n)\}]}\,.
\]
Then under $\widetilde{\PP}_n$, $Z_n$ satisfies an LDP at speed $\beta_n$ with rate
\[
\widetilde{\mathsf r}(Z=z)
=
\ell(z)+\mathsf r(Z=z)-\inf_{u}\bigl\{\ell(u)+\mathsf r(Z=u)\bigr\}.
\]
\end{lemma}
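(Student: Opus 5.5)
We argue through the Laplace--Varadhan characterization \eqref{eq:ldp-conv} of LDP convergence. Write $\widetilde Z$ for the idempotent limit we aim to identify, and set $c:=\inf_u\{\ell(u)+\mathsf r(Z=u)\}$. We must show that for every bounded continuous $\varphi$,
\[
\lim_{n\to\infty}-\frac1{\beta_n}\log\widetilde{\EE}_n\bigl[e^{-\beta_n\varphi(Z_n)}\bigr]
=\min_z\bigl\{\varphi(z)+\widetilde{\mathsf r}(Z=z)\bigr\}.
\]

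By definition of the tilted law, we have
\[
\widetilde{\EE}_n\bigl[e^{-\beta_n\varphi(Z_n)}\bigr]
=\frac{\EE_{\PP_n}\bigl[e^{-\beta_n(\varphi+\ell)(Z_n)}\bigr]}{\EE_{\PP_n}\bigl[e^{-\beta_n\ell(Z_n)}\bigr]}.
\]
Since $\ell$ and $\varphi+\ell$ are both bounded and continuous, we may apply \eqref{eq:varadhan} to each factor separately under $\PP_n$. The numerator then gives $-\beta_n^{-1}\log(\cdot)\to\inf_z\{\varphi(z)+\ell(z)+\mathsf r(Z=z)\}$, and the denominator gives the limit $c$. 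Taking the difference of the limits yields
\[
\inf_z\bigl\{\varphi(z)+\ell(z)+\mathsf r(Z=z)-c\bigr\}=\inf_z\bigl\{\varphi(z)+\widetilde{\mathsf r}(Z=z)\bigr\}.
\]
The case $\varphi\equiv0$ confirms that $\inf\widetilde{\mathsf r}=0$.

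Next we check that $\widetilde{\mathsf r}$ is a good rate function. Lower semicontinuity follows because it is the sum of a lower semicontinuous function, a continuous function, and a constant. Compactness of level sets follows from $\{\widetilde{\mathsf r}\le a\}\subseteq\{\mathsf r\le a+c+\sup|\ell|\}$, which holds by boundedness of $\ell$, together with closedness of $\{\widetilde{\mathsf r}\le a\}$. In particular the infima above are attained, so they are minima as written in \eqref{eq:ldp-conv}.

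The main obstacle is the final step: passing from the Laplace-functional convergence to the full LDP bounds for Borel sets, if the stronger formulation is wanted. For this we invoke Bryc's inverse Varadhan lemma (Dembo--Zeitouni, Thm.~4.4.2). It requires exponential tightness of $Z_n$ under $\widetilde{\PP}_n$, which we obtain directly: for any set $A$, $\widetilde{\PP}_n(A)\le e^{2\beta_n\sup|\ell|}\PP_n(A)$, and exponential tightness under $\PP_n$ follows from the goodness of $\mathsf r$ in finite dimensions. Alternatively, the upper and lower bounds can be transferred set-by-set, by splitting a neighborhood of each $z$ into small balls on which $\ell$ is nearly constant.
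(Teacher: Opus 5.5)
Your proposal is correct and follows the paper's route: you write the tilted Laplace functional as a ratio of two untilted ones, apply Varadhan's lemma to the numerator and denominator ($\varphi+\ell$ and $\ell$ are both bounded and continuous), and subtract the limits. Your extra steps are sound but go beyond the paper, which takes the Laplace--Varadhan form as its definition of LDP convergence: these are goodness of $\widetilde{\mathsf r}$ via $\{\widetilde{\mathsf r}\le a\}\subseteq\{\mathsf r\le a+c+\sup|\ell|\}$, and the upgrade to set-wise bounds via exponential tightness ($\widetilde{\PP}_n\le e^{2\beta_n\sup|\ell|}\PP_n$) and Bryc's lemma. Bryc is not actually needed for that upgrade, since a Laplace principle with a good rate function already implies the full LDP with the same rate function.
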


\begin{proof}
For any bounded continuous $\varphi$ on $\R^d$,
\[
-\frac1{\beta_n}\log \widetilde{\EE}_n\!\left[e^{-\beta_n\varphi(Z_n)}\right]
=
-\frac1{\beta_n}\log
\frac{\EE_n\!\left[e^{-\beta_n(\varphi+\ell)(Z_n)}\right]}
{\EE_n\!\left[e^{-\beta_n\ell(Z_n)}\right]}.
\]
Apply Varadhan’s lemma to the numerator and denominator and subtract the limits, yielding
\[
\inf_z\{\varphi(z)+\ell(z)+\mathsf r(z)\}-\inf_z\{\ell(z)+\mathsf r(z)\}.
\]
This is exactly the LDP in the form of Varadhan-Laplace principle with rate $\widetilde{\mathsf r}$.
\end{proof}

\begin{proof}[Proof of Theorem~\ref{thm:posterior-output-ldp}]
Apply Lemma~\ref{lem:bayes-proof} to $Z_n=H_n$ with speed $\beta_n=n$ and
\[
\ell(h)=\mathcal L(h):=\frac12\sum_{i\in D}\|h(x_i)-y_i\|_2^2,
\]
corresponding to the tempered posterior density proportional to
$\exp\{-n\,\mathcal L(H_n)\}$ with respect to the prior law of $H_n$.
Since $H_n\stackrel{n}{\to} H$ under the prior with rate $I_{\mathrm{prior}}^{(L)}$ by
Theorem~\ref{thm:prior-output-ldp}, Lemma~\ref{lem:bayes-proof} gives the posterior rate
\[
I_{\mathrm{post}}^{(L)}(h)
=
\mathcal L(h)+I_{\mathrm{prior}}^{(L)}(h)-\inf_u\{\mathcal L(u)+I_{\mathrm{prior}}^{(L)}(u)\}.
\]
Substituting \eqref{eq:output-rate-min-kappa} into the right-hand side yields
\[
I_{\mathrm{post}}^{(L)}(h)
=
\inf_{\kappa\in\mathbb S_+^{|\cX|}}
\left\{
\frac12\sum_{i\in D}\|h(x_i)-y_i\|_2^2 + \frac12\|h\|_\kappa^2 + I^{(L-1)}(\kappa)
\right\}
\;+\;(\text{additive constant}),
\]
which is \eqref{eq:posterior-rate-min-kappa}.
\end{proof}

\begin{remark}
Although we proved Theorem~\ref{thm:posterior-output-ldp} only for bounded loss functions, the quadratic case follows along the same lines, from a refined version of Varadhan-Laplace principle,  \cite[Theorem~4.3.1]{dembo2009large}, which allows for unbounded continuous functions, provided an exponential tail condition is satisfied, which holds true in the quadratic (and more generally non-negative) cases. %$ \phi=-\ell\leq 0$, so
% \[
% e^{\phi/\varepsilon} \leq 1,
% \]
% and the tail condition (4.3.2) is satisfied directly. 
Hence, the posterior rate function is obtained by adding the loss to the prior rate function, up to normalization exactly  as in (\ref{eq:posterior-rate-min-kappa}).
\end{remark}

% ------------------------------------------------------------
\subsection{Kernel selection and separation from NNGP}
\label{sec:proof-kernel-separation}

We now prove Corollary~\ref{thm:kernel-separation}. For clarity we present the argument for
$d_{\mathrm{out}}=1$; the matrix-valued case follows by replacing quadratic forms by traces.

\begin{proof}[Proof of Corollary~\ref{thm:kernel-separation}]
Fix $\kappa\in\mathbb S_+^{|\cX|}$ and consider the inner minimization problem appearing in
\eqref{eq:posterior-rate-min-kappa}:
\[
\inf_{h:\cX\to\R}\left\{
\frac12\sum_{i\in D}(h(x_i)-y_i)^2+\frac12\|h\|_\kappa^2
\right\}.
\]
By the representer theorem on the finite set $\cX$, the minimizer has the form
$h(\cdot)=\sum_{j\in D}\alpha_j \kappa(\cdot,x_j)$, i.e. $h=\kappa_{\cdot D}\alpha$.
Writing $K:=\kappa_{DD}$, $k_x:=\kappa_{xD}$, we have
\[
\|h\|_\kappa^2=\alpha^\top K\,\alpha,
\qquad
h_D = K\alpha.
\]
Therefore the objective becomes, as a function of $\alpha$,
\[
\frac12\|K\alpha-y_D\|_2^2+\frac12\,\alpha^\top K\alpha.
\]
This is strictly convex on $\mathrm{Ran}(K)$ and its minimizer satisfies
\[
(K+\mathrm{Id})K\alpha = K y_D
\quad\Longrightarrow\quad
(K+\mathrm{Id})\alpha = y_D \ \ \text{on }\mathrm{Ran}(K),
\]
hence we may take $\alpha=(K+\mathrm{Id})^{-1}y_D$ (interpreting the inverse as the
Moore--Penrose inverse if needed). Plugging back yields the minimal value
\[
\inf_h\left\{\frac12\sum_{i\in D}(h(x_i)-y_i)^2+\frac12\|h\|_\kappa^2\right\}
=
\frac12\,y_D^\top (K+\mathrm{Id})^{-1}y_D.
\]
Consequently, after minimizing out $h$, the $\kappa$-dependent posterior objective becomes
\[
I^{(L-1)}(\kappa)+\frac12\,y_D^\top(\kappa_{DD}+\mathrm{Id})^{-1}y_D,
\]
which is exactly \eqref{eq:phi-kappa} (up to an additive constant independent of $\kappa$).

Finally, let $\kappa_0$ be the unique minimizer of $I^{(L-1)}$ (the NNGP kernel).
If $y_D\neq 0$, the map $\kappa\mapsto \frac12\,y_D^\top(\kappa_{DD}+\mathrm{Id})^{-1}y_D$
is strictly \emph{decreasing} along positive semidefinite directions, hence it is not
constant in any neighborhood of $\kappa_0$. Therefore $\kappa_0$ cannot minimize the sum
unless the data-term is identically constant, which only happens for $y_D=0$.
Thus any posterior-optimal kernel $\kappa^\star$ satisfies $\kappa^\star\neq\kappa_0$.
\end{proof}

The computation above is the precise form of the ``min--min exchange'' discussed in the
main text: the posterior selects $\kappa$ by minimizing a \emph{kernel-level} objective
obtained after eliminating $h$ (equivalently, the dual variables $\alpha$). This makes
explicit that, unlike GP regression with fixed kernel, the LDP posterior performs an
implicit \emph{kernel selection} step.

\section{Linear activation: explicit large-deviation formulas}
\label{app:explicit}

This appendix illustrates the large-deviation framework in the simplest analytically
tractable setting: linear activation functions and a single input.
While overly simplistic from a modeling perspective, this case provides explicit formulas
that clarify the structure of kernel and output rate functions and explain the non-quadratic
geometries observed numerically in Section~\ref{sec:numerics}.

Throughout, we consider the linear activation
\[
\sigma(x) = \sqrt{a}\,x , \qquad a>0,
\]
and adopt the notation and layer indexing of the main text.

% ------------------------------------------------------------
\paragraph{Prior rate function for layer-wise kernels}

Let $K^{(\ell)}$ denote the random kernel at layer $\ell$, and let
\[
I^{(\ell)}(\kappa) := \mathsf{r}\bigl(K^{(\ell)}=\kappa\bigr)
\]
denote its prior large-deviation rate function.
We work in the single-input setting, so kernels reduce to non-negative scalars
$\kappa \in \mathbb{R}_+$.

For linear activation, the conditional log-moment generating function can be computed
explicitly, yielding the Legendre transform
\begin{equation}
\label{eq:linear_phi_star}
J(\kappa \mid \kappa_0)
=
\sup_{\lambda \in \mathbb{R}}
\left\{
\lambda \kappa + \frac{1}{2}\log\bigl(1-2\lambda a \kappa_0\bigr)
\right\}.
\end{equation}
Stationarity gives
\[
\kappa = \frac{a\kappa_0}{1-2\lambda a\kappa_0},
\]
and substitution yields the explicit formula
\begin{equation}
\label{eq:linear_phi_star_explicit}
J(\kappa \mid \kappa_0)
=
\frac12\left[
\frac{\kappa}{a\kappa_0}
-\log\!\left(\frac{\kappa}{a\kappa_0}\right)
-1
\right].
\end{equation}

\emph{Shallow case ($L=1$).}
For a single hidden layer, the kernel rate function coincides with $J*$:
\begin{equation}
\label{eq:I2_linear}
I^{(1)}(\kappa)
=
\frac12\left[
\frac{\kappa}{a\kappa^{(0)}}
-\log\!\left(\frac{\kappa}{a\kappa^{(0)}}\right)
-1
\right].
\end{equation}
This function is convex, with a unique minimum at
$\kappa = a\kappa^{(0)}$, corresponding to the NNGP kernel.

\emph{Deeper networks.}
For depth $L\ge2$, the kernel rate functions satisfy the recursive variational relation
\[
I^{(\ell)}(\kappa)
=
\inf_{\kappa'}
\Bigl\{
I^{(\ell-1)}(\kappa')
+
\Phi^*(\kappa \mid \kappa')
\Bigr\}.
\]
This recursion can be solved explicitly by induction.

\begin{theorem}[Explicit kernel rate functions for linear activation]
\label{thm:linear_kernel_rates}
Assume linear activation $\sigma(x)=\sqrt{a}\,x$ and a single input.
Then for every depth $L\ge1$ and $\kappa\ge0$,
\begin{equation}
\label{eq:linear_kernel_rate_general}
I^{(L)}(\kappa)
=
\frac{L}{2}
\left[
\left(\frac{\kappa}{a^{L}\kappa^{(0)}}\right)^{\!1/L}
-
\log\!\left(\frac{\kappa}{a^{L}\kappa^{(0)}}\right)^{\!1/L}
-1
\right].
\end{equation}
The function $I^{(L)}$ has a unique global minimum at
$\kappa = a^{L}\kappa^{(0)}$.
For $L>1$, it is not globally convex.
\end{theorem}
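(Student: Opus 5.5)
Set $r:=\kappa/(a^{L}\kappa^{(0)})$ and write the layer cost as $J(\kappa\mid\kappa')=\tfrac12 g\bigl(\kappa/(a\kappa')\bigr)$ with $g(t):=t-\log t-1$, which is just \eqref{eq:linear_phi_star_explicit} rewritten. The claimed formula is then $I^{(L)}(\kappa)=\tfrac{L}{2}\,g(r^{1/L})$. I will argue by induction on $L$. The base case $L=1$ is exactly \eqref{eq:I2_linear}.

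For the inductive step, assume the formula holds at depth $L$. The recursion gives
\[
I^{(L+1)}(\kappa)=\inf_{\kappa'>0}\Bigl\{\tfrac{L}{2}\,g\bigl(s^{1/L}\bigr)+\tfrac12\,g\bigl(\kappa/(a\kappa')\bigr)\Bigr\},\qquad s:=\kappa'/(a^{L}\kappa^{(0)}).
\]
Put $u:=s^{1/L}>0$ and $v:=\kappa/(a\kappa')$. These are linked by $u^{L}v=\kappa/(a^{L+1}\kappa^{(0)})=:R$, so as $\kappa'$ ranges over $(0,\infty)$, $u$ ranges over $(0,\infty)$ with $v=R/u^{L}$.

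The objective becomes
\[
\tfrac12\bigl[L u-L\log u-L+v-\log v-1\bigr]=\tfrac12\bigl[L u+R u^{-L}-\log R-(L+1)\bigr],
\]
since the logarithms combine into $\log(u^{L}v)=\log R$. It remains to minimise $f(u)=Lu+Ru^{-L}$ over $u>0$. This function is strictly convex, with derivative $L-LRu^{-L-1}$, so its unique critical point is $u^{*}=R^{1/(L+1)}$. The minimum value is $f(u^{*})=(L+1)R^{1/(L+1)}$. Since $-\log R=-(L+1)\log R^{1/(L+1)}$, this gives
\[
I^{(L+1)}(\kappa)=\tfrac{L+1}{2}\,g\bigl(R^{1/(L+1)}\bigr),
\]
which is the claim at depth $L+1$.

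The reduction to a one-variable convex problem through the multiplicative constraint $u^{L}v=R$ is the only real step. The remaining work is bookkeeping the normalisations $a^{L}\kappa^{(0)}$ and handling the endpoint $\kappa=0$. There both sides are $+\infty$, by the $-\log$ blow-up and the convention that $J=+\infty$ for $\kappa'=0$.

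For the last two claims, note that $g\ge0$ with equality only at $t=1$. Hence the unique minimiser is at $r=1$, that is, $\kappa=a^{L}\kappa^{(0)}$.

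For non-convexity, write $I^{(L)}$ as a function of $r$, up to positive scaling: $\phi(r)=\tfrac L2\,r^{1/L}-\tfrac12\log r-\tfrac L2$. Its second derivative is
\[
\phi''(r)=\tfrac12 r^{-2}\Bigl[1-\tfrac{L-1}{L}\,r^{1/L}\Bigr].
\]
For $L>1$ this is negative when $r^{1/L}>L/(L-1)$. So $I^{(L)}$ is concave for large $\kappa$ and therefore not globally convex.
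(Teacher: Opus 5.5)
Your proof is correct and follows the route the paper indicates: induction on the variational recursion, which the paper only asserts ("can be solved explicitly by induction"). Your substitution $u^{L}v=R$ supplies the missing computation, and your checks of the unique minimiser and of non-convexity via $\phi''$ are right. One small nit: $J(0\mid 0)=0$ rather than $+\infty$, but the point $\kappa'=0$ is excluded from the infimum anyway, since $I^{(L)}(0)=+\infty$ by the inductive hypothesis.
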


This loss of convexity reflects the increasing flexibility of kernel fluctuations as depth
grows and foreshadows the non-quadratic behavior of output rate functions.

% ------------------------------------------------------------
\paragraph{Prior rate function for the network output}

We now turn to the prior large-deviation rate function of the network output.
In the single-input, scalar-output setting, this reduces to a function
$I^{(L)}_{\mathrm{prior}}(y)$ with $y\in\mathbb{R}$.

\emph{Shallow case ($L=1$).}
Using \eqref{eq:I2_linear}, the output rate function is obtained by minimizing over
$\kappa\ge0$:
\[
I^{(1)}_{\mathrm{prior}}(y)
=
\inf_{\kappa\ge0}
\left\{
\frac12\left[
\frac{\kappa}{a\kappa^{(0)}}
-\log\!\left(\frac{\kappa}{a\kappa^{(0)}}\right)
-1
\right]
+
\frac{y^2}{2\kappa}
\right\}.
\]
The minimization  can be performed explicitly, yielding to the following expression:
\begin{align}
I^{(1)}_{\mathrm{prior}}(y)
&=
\frac14\Bigl(1+\sqrt{1+\tfrac{4y^2}{a\kappa^{(0)}}}\Bigr)
-\frac12\log\!\Bigl[\tfrac12\Bigl(1+\sqrt{1+\tfrac{4y^2}{a\kappa^{(0)}}}\Bigr)\Bigr]
-\frac12
\notag\\
&\quad
+
\frac{y^2}{a\kappa^{(0)}\bigl(1+\sqrt{1+\tfrac{4y^2}{a\kappa^{(0)}}}\bigr)} .
\label{eq:linear_output_rate}
\end{align}
Actually, still in the single-input setting, the case of multi-dimensional outputs $y \in \mathbb{R}^{d_{\mathrm{out}}}$ can be handled similarly, leading to radial rate functions, $I^{(1)}_{\mathrm{prior}}(y)$ (just replace $y^2$ with the squared norm $|y|^2$) and satisfying
\[
I^{(1)}_{\mathrm{prior}}(y)\sim |y|
\qquad\text{as } |y|\to\infty .
\]
This linear growth should be compared also the tails observed numerically for ReLU activations.

\paragraph{Deep case (\(L>1\)).}
For \(L>1\), an explicit formula is no longer available, but the asymptotic growth of
\(I_{\mathrm{prior}}^{(L)}\) can be characterized.

Let \(\kappa^*(y) = \kappa^{(L),*}(y)\) denote the minimizer in the definition of $I_{\mathrm{prior}}^{(L)}(y)$. The first-order optimality condition implies that \(\kappa^\star(y)\) satisfies
\begin{equation}
\label{eq:kappa_star_eq}
\frac{1}{(a^{L}\kappa^{(0)})^{1/L}}
\bigl(\kappa^\star\bigr)^{\frac{L+1}{L}}
-
\kappa^\star
-
y^2
=0.
\end{equation}
This relation can be rewritten as
\begin{equation}
\label{eq:kappa_star_scaling}
\kappa^\star
=
\bigl(a^{L}\kappa^{(0)}\bigr)^{\frac{1}{L+1}}
\bigl(y^2+\kappa^\star\bigr)^{\frac{L}{L+1}} .
\end{equation}

To extract the asymptotic behavior, assume that
\(
\kappa^\star(y)\sim |y|^{d}
\)
for some exponent \(d>0\).
If \(d\ge2\), then the right-hand side of \eqref{eq:kappa_star_scaling} scales as
\(|y|^{dL/(L+1)}\), which is inconsistent since \(d\neq dL/(L+1)\) for \(L\ge1\).
Hence \(d<2\), and the dominant contribution comes from the \(y^2\) term, yielding
\[
d = \frac{2L}{L+1}.
\]
Substituting this scaling into the variational definition of $I^{(L)}_{\mathrm{prior}}$ gives
\begin{equation}
\label{eq:linear_output_asymptotic}
I_{\mathrm{prior}}^{(L)}(y)
\;\asymp\;
|y|^{\frac{2}{L+1}},
\qquad |y|\to\infty.
\end{equation}

Thus, the prior output rate function exhibits \emph{sublinear growth}, with an exponent that
decreases as depth increases.
This consideration may suggest why deeper networks with ReLU activation exhibit increasingly heavy-tailed
large-deviation behavior (even in the absence of nonlinear feature learning).

\section{Numerical diagnostics and optimization stability}
\label{app:numerics}

This appendix collects numerical diagnostics supporting the stability and reliability of the
computations reported in Section~\ref{sec:numerics}.
All experiments are implemented in \texttt{Python/JAX}.
While the large-deviation formulations lead to finite-dimensional convex or nearly convex
optimization problems, some care is required to handle numerical conditioning and curvature,
especially when optimizing over kernel variables.
The figures in this appendix illustrate that these issues are well controlled in practice.

\paragraph{Optimization of prior and posterior rate functions.}
Figures~\ref{fig:diag_prior}, \ref{fig:diag_post}~and \ref{fig:diag_mode} report diagnostics for the computation of
the prior, posterior output rate functions, and prediction respectively, in the ReLU case and Heaviside training dataset.
Each figure consists of four panels.

\emph{Top panel:} the prior/posterior rate function evaluated on the output grid.

\emph{Second panel:}  In orange, the norm of the gradient during the final iterations of the
\emph{inner optimization}, corresponding to the Legendre--Fenchel transform
$J^{\sigma}(\kappa \mid \kappa_0)$ of the conditional $\log$-MGF in \eqref{eq:kernel-ldp-recursion}. This optimization is performed using a hybrid Adam--L-BFGS scheme, with Adam providing robust initial exploration and L-BFGS ensuring fast local convergence. In blue, the norm of the gradient during the \emph{outer optimization} over the kernel variable $\kappa$, carried out using Adam only.

\emph{Third panel:} The kernel is parametrized via its Cholesky factorization to enforce positive semidefiniteness.
The plot reports the minimum diagonal entry of the optimized kernel, confirming that the
solution remains well inside the positive definite cone and away from numerical degeneracy.

\emph{Bottom panel:} the relative operator-norm gap between the optimized kernel $\kappa^\star$
and the NNGP kernel $\kappa_0$.
As discussed in the main text, this gap vanishes at the typical point for the prior and remains
nonzero under posterior conditioning, reflecting kernel selection effects.

Together, these diagnostics indicate stable convergence of both the inner and outer optimization
loops, with no signs of numerical pathologies.

\begin{figure}[t]
\centering
\includegraphics[width=\linewidth]{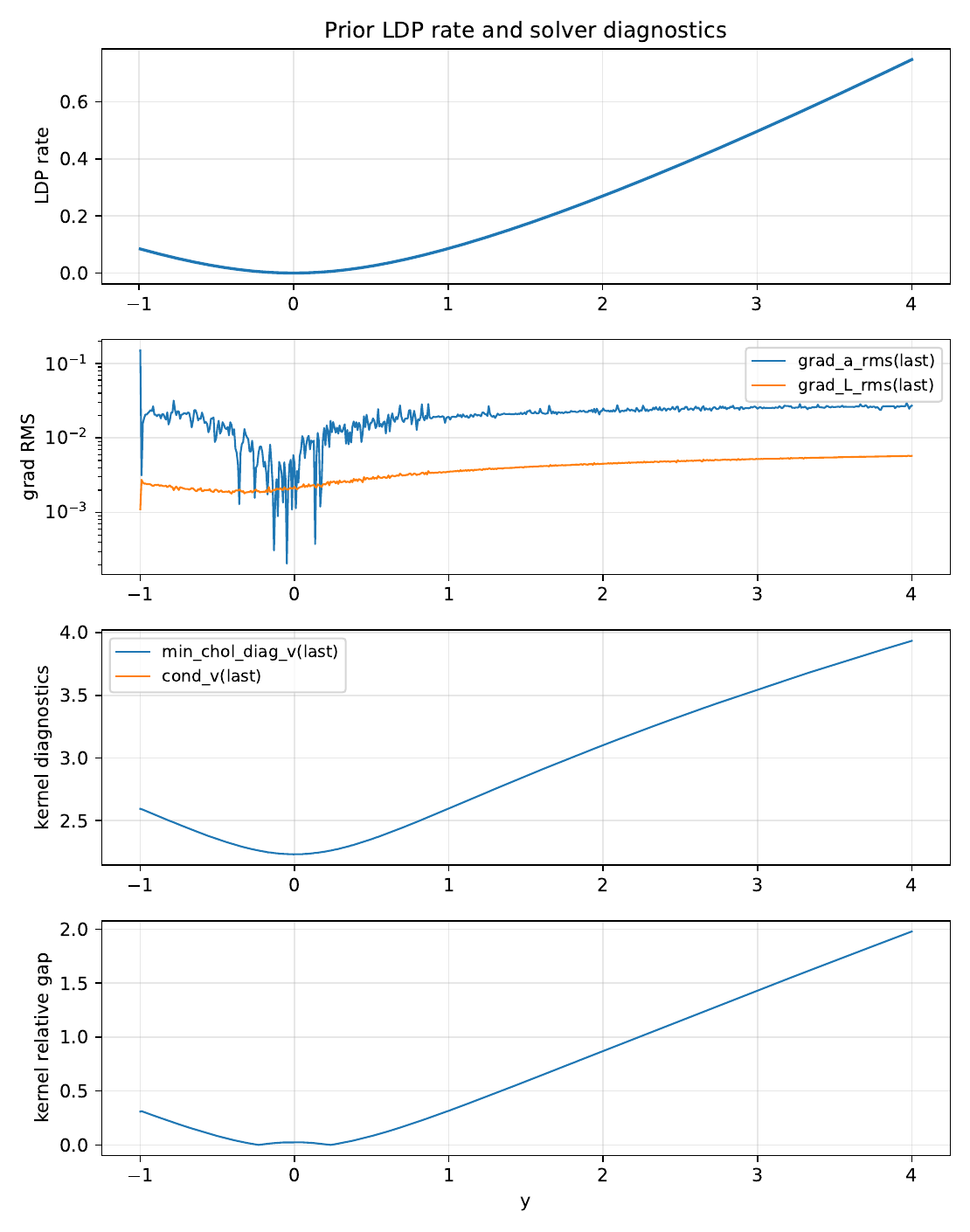}
\caption{\textbf{Diagnostics for prior rate computation (ReLU).}
Rate function, inner and outer gradient norms, minimum kernel diagonal entry, and kernel gap
relative to the NNGP kernel.}
\label{fig:diag_prior}
\end{figure}

\begin{figure}[t]
\centering
\includegraphics[width=\linewidth]{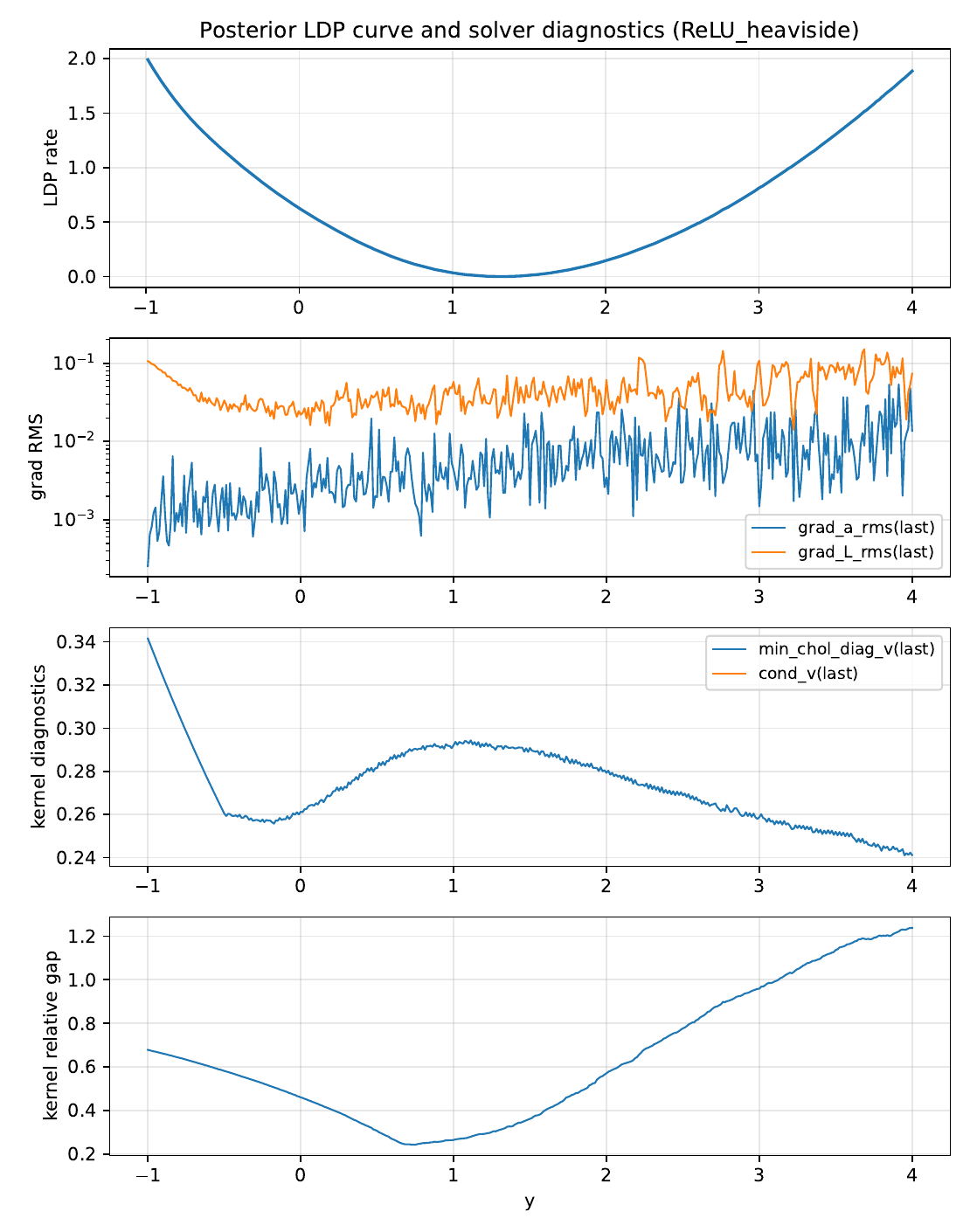}
\caption{\textbf{Diagnostics for posterior rate computation (ReLU, Heaviside data).}
Same layout as Figure~\ref{fig:diag_prior}, illustrating stable optimization under posterior
conditioning.}
\label{fig:diag_post}
\end{figure}

\begin{figure}[t]
\centering
\includegraphics[width=\linewidth]{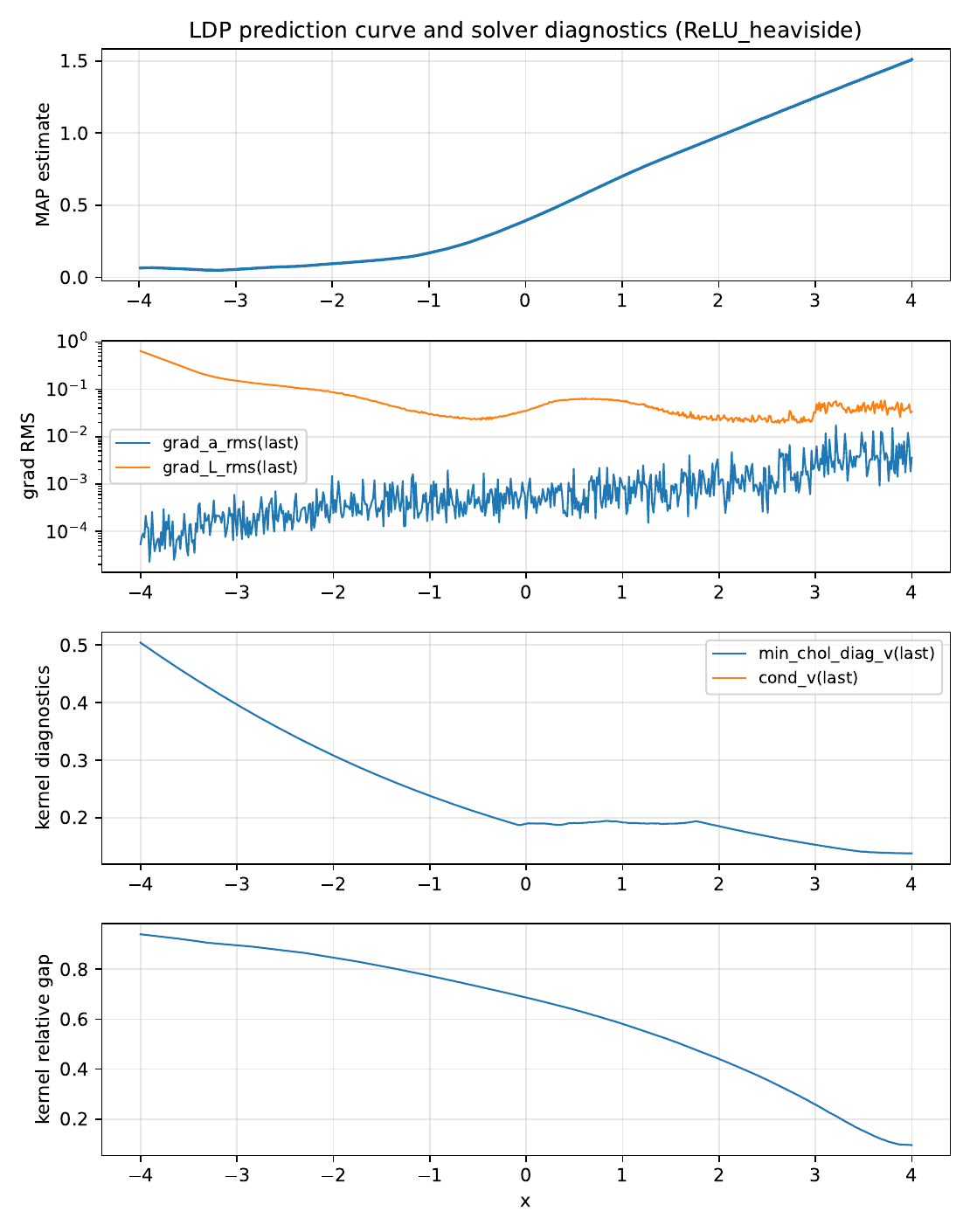}
\caption{\textbf{Diagnostics for MAP prediction computation (ReLU, Heaviside data).}
Same layout as Figure~\ref{fig:diag_prior}, illustrating stable optimization also for the prediction.}
\label{fig:diag_mode}
\end{figure}

\paragraph{Monte Carlo diagnostics for posterior sampling.}
Figure~\ref{fig:diag_mc} reports diagnostics for the Monte Carlo simulations used in
Experiment~03B.
Samples are drawn using the Metropolis-adjusted Langevin algorithm (MALA), which was preferred
over SGLD due to its superior stability in the presence of the strong curvature induced by the
$n$-tempered likelihood.

The figure shows trace plots for multiple (2 for clearer visibility -- while the main body experiment was performed on 10) independent chains targeting the finite-width
posterior at width $n=128$.
Despite the strong concentration induced by the large-deviation scaling, the chains mix
adequately and do not exhibit sticking or metastability.
Acceptance rates are reported in the main text and remain in a stable regime across chains.

These diagnostics support the claim that the empirical posterior distributions used for
comparison are reliably sampled and that discrepancies between Monte Carlo estimates and
NNGP predictions are not due to sampling artifacts.

\begin{figure}[t]
\centering
\includegraphics[width=\linewidth]{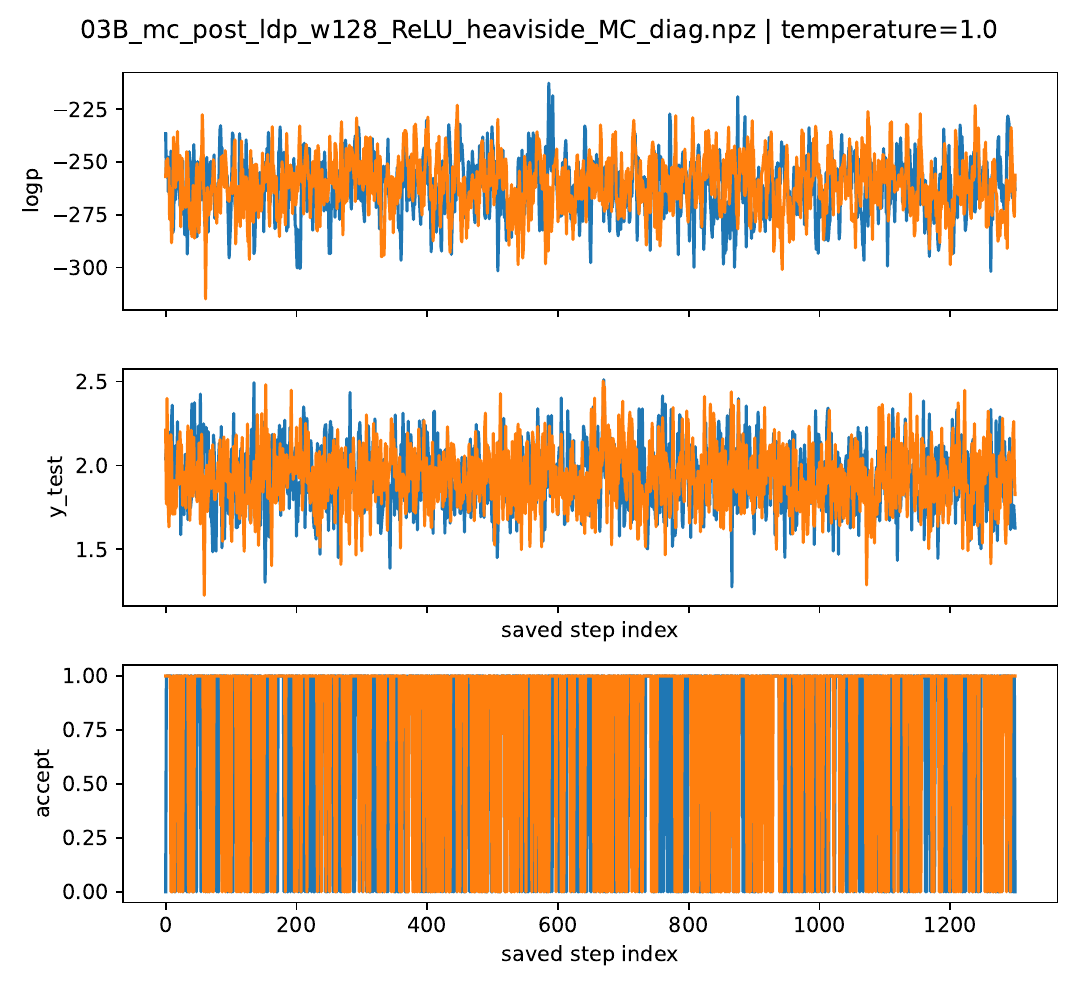}
\caption{\textbf{MCMC diagnostics for finite-width posterior sampling.}
Trace plots for multiple (2) MALA chains targeting the $n$-tempered posterior at width $n=128$.}
\label{fig:diag_mc}
\end{figure}

\end{document}